\newcommand*\circled[1]{\tikz[baseline=(char.base)]{
            \node[shape=circle,draw,inner sep=2pt, scale=.75] (char) {#1};}}
\newtheorem{theorem}{Theorem}[section]
\newtheorem*{theorem*}{Theorem}
\newtheorem{lemma}[theorem]{Lemma}
\newtheorem{proposition}[theorem]{Proposition}
\newtheorem{corollary}[theorem]{Corollary} 
\newtheorem{assumption}{Assumption} 
\theoremstyle{remark}
\newtheorem{remark}{Remark}
\newcommand{\R}{\mathbb{R}}
\newcommand{\bb}{\mathbb}
\newcommand{\Cal}{\mathcal}
\newcommand{\oh}{\otimes_{\mathcal{H}}}
\newcommand{\ohm}{\otimes_{\mathcal{H}_m}}
\newcommand{\X}{\mathcal{X}}
\newcommand{\HSH}{{\Cal{L}^2(\mathcal{H})}}
\newcommand{\OPH}{{\Cal{L}^\infty(\mathcal{H})}}
\newcommand{\OPRn}{{\Cal{L}^\infty(\mathbb{R}^n)}}
\newcommand{\Pb}{\mathbb{P}}
\newcommand{\Hk}{\mathcal{H}}
\newcommand{\kbar}{\bar{k}(\cdot,X)}
\newcommand{\id}{\mathfrak{I}}
\newcommand{\Var}{\text{Var}}
\newcommand{\inner}[2]{\left\langle #1,#2\right\rangle}
\newcommand{\norm}[1]{\left\lVert#1\right\rVert}
\newcommand{\iid}{i.\,i.\,d.}
\newcommand{\E}{\mathbb{E}}
\newcommand{\bm}{\mathbf}
\newcommand{\sigh}{\widehat{\Sigma}}
\newcommand{\lambdah}{\widehat{\lambda}}
\newcommand{\lambdatild}{\Tilde{\lambda}}
\newcommand{\phih}{\widehat{\phi}}
\newcommand{\phitild}{\Tilde{\phi}}
\newcommand{\lp}{{L^2(\mathbb{P})}}
\newcommand{\intx}{\int_{\mathcal{X}}}
\newcommand{\HS}{\mathcal{L}^2}
\newcommand{\Tr}{\mathcal{L}^1}
\newcommand{\vp}{\varphi}
\title{Statistical Optimality and Computational Efficiency of Nystr\"{o}m Kernel PCA}
\author{Nicholas Sterge}
\author{Bharath Sriperumbudur}
\affil{Department of Statistics,  
Pennsylvania State University\\
University Park, PA 16802, USA.\\
\texttt{\{nzs5368,bks18\}@psu.edu}}
\date
\begin{document}

\maketitle

\begin{abstract}

Kernel methods provide an elegant framework for developing nonlinear learning algorithms from simple linear methods. Though these methods have superior empirical performance in several real data applications, their usefulness is inhibited by the significant computational burden incurred in large sample situations. Various approximation schemes have been proposed in the literature to alleviate these computational issues, and the approximate kernel machines are shown to retain the empirical performance. However, the theoretical properties of these approximate kernel machines are less well understood. In this work, we theoretically study the trade-off between computational complexity and statistical accuracy in Nystr\"om approximate kernel principal component analysis (KPCA), wherein we show that the Nystr\"om approximate KPCA matches the statistical performance of (non-approximate) KPCA while remaining computationally beneficial. Additionally, we show that Nystr\"om approximate KPCA outperforms the statistical behavior of another popular approximation scheme, the random feature approximation, when applied to KPCA.
\end{abstract}
\textbf{MSC 2010 subject classification:} Primary: 65R15; Secondary: 62H25, 46E22, 65F55.\\
\textbf{Keywords and phrases:} Principal component analysis, kernel PCA, Nystr\"{o}m approximation, reproducing kernel Hilbert space, covariance operator, U-statistics
\setlength{\parskip}{4pt}


\section{Introduction}
Principal component analysis (PCA) \citep{Jollife-86} is an unsupervised learning technique in which a random variable $X$ taking values in $\R^d$ is projected onto the direction $\mathbf{a}\in\R^d$ such that $\text{Var}[\mathbf{a}^\top X]$ is maximized. Further, for some $\ell<d$, PCA may be used to find an $\ell$-dimensional subspace retaining the maximum possible variance of $X$, making PCA a popular methodology for dimension reduction and feature extraction. This low-dimensional subspace is the $\ell$-eigenspace, i.e., the span of the eigenvectors associated with the top $\ell$ eigenvalues of the covariance matrix $\E XX^\top-\E X\E X^\top$. The respective eigenvectors are referred to as the principal components of the data, and a lower-dimensional representation of the input data may be computed by projecting onto the principal components. 

\par The principal components outputted by PCA are linearly related to the original coordinates; however, in many cases a non-linear component provides a better description of the data. Kernel PCA (KPCA) \citep{Scholkopf-98} is a non-linear extension of PCA which maps the original data into a reproducing kernel Hilbert space (RKHS) \citep{Aronszajn-50} where PCA is performed, resulting in principal components which are non-linearly related to the original data. Specifically, for an RKHS $\Hk$ with reproducing kernel $k:\X\times\X\rightarrow\R$, KPCA solves $\sup\{\text{Var}[f(X)]:\norm{f}_\Hk=1\}$.  Analogous to linear PCA, the principal components in KPCA are the eigenfunctions of the covariance operator $\Sigma=\E[\Phi(X)\oh\Phi(X)]-\E\Phi(X)\oh\E\Phi(X)$, where $\Phi(X)=k(\cdot,X)$ is the \textit{feature map}. Similarly, an $\ell$-dimensional representation of $X$ is obtained by projecting onto the $\ell$-eigenspace of $\Sigma$. Kernel PCA has been employed successfully in tasks such as computer vision \citep{Lampert-09}, image denoising \citep{Mika-99}, and other learning environments with complex spatial structures.

\par Empirically, given $X_1,\ldots,X_n\stackrel{\iid}{\sim}\Pb$, the eigenfunctions of $\Sigma$ are estimated by those of the empirical covariance operator \begin{equation}\sigh=\frac{1}{2n(n-1)}\sum_{i\neq j}^n\left(\Phi(X_i)-\Phi(X_j)\right)\oh\left(\Phi(X_i)-\Phi(X_j)\right),\label{Eq:sigh1}\end{equation} yielding empirical KPCA (EKPCA). Though this may require solving an infinite dimensional system, it can be shown (see Proposition~\ref{prop:ekpca soln}) that the eigenfunctions of $\sigh$ can be obtained by solving an $n$-dimensional eigenvalue problem, which has a computational requirement of $O(n^3)$ and a memory requirement of $O(n^2)$. This means, EKPCA scales quite poorly with large sample sizes, a behavior shared by many kernel methods. This has lead to a lot of research activity in constructing approximation methods which relieve the computational burden. Nystr\"{o}m method \citep{Reinhardt-85, Williams-98} is a popular approximation scheme, which
uses a subsample of the original data to construct a low-rank approximation to the Gram matrix $\mathbf{K}=[k(X_i,X_j)]_{i,j}$, which in turn is closely related to $\widehat{\Sigma}$. More precisely, the Gram matrix $\mathbf{K}$ is approximated by 
$$\tilde{\mathbf{K}}=\mathbf{K}_{nm}\mathbf{K}_{mm}^{-1}\mathbf{K}_{nm}^\top,$$
where $\mathbf{K}_{nm}$ is the matrix formed by randomly selecting $m$ columns of $\mathbf{K}$ and $\mathbf{K}_{mm}$ is the intersection of those $m$ columns and $m$ rows of $\mathbf{K}$. The approximate Gram matrix can be used in subsequent learning tasks, resulting in computational saving for $m<n$. In the case of KPCA, the computational complexity is reduced from $O(n^3)$ to $O(nm^2)$. Of course, the question of interest is whether this computational saving comes at the cost of statistical accuracy. In this work we establish the consistency of Nystr\"om KPCA (NY-EKPCA), and study the relationship between statistical behavior and computational complexity.

\subsection{Contributions}\label{sec:contributions}
The contributions of the paper are as follows:
\vspace{2.5mm}

\noindent \emph{(i)} In Section \ref{sec:NY-EKPCA}, we propose Nystr\"om empirical KPCA (NY-EKPCA), and demonstrate its computational complexity. In Section \ref{sec:NY-EKPCA main results}, we compare the performance of empirical KPCA (EKPCA) with that of NY-EKPCA in terms of the reconstruction error of the $\ell$-eigenspaces in $\Hk$. We show that NY-EKPCA matches the statistical performance of EKPCA with less computational complexity, provided the number of subsamples, $m$, is large enough, and the number of eigenfunctions used in the reconstruction, $\ell$, is not too large. We note that similar analysis has been performed working with the uncentered covariance operator \citep{Sterge-20}, that is, $\sigh$ is defined as a V-statistic estimator where the mean element is assumed to be zero ($\E_{X\sim\bb{P}} k(\cdot,X)=0$), i.e., $\sigh=\frac{1}{n}\sum^n_{i=1}k(\cdot,X_i)\oh k(\cdot,X_i)$. This assumption of $\E_{X\sim\bb{P}} k(\cdot,X)=0$ is highly restrictive, as it is not satisfied by virtually all common kernels, e.g., Gaussian, Mat\'ern, inverse multiquadric, that induce an infinite dimensional RKHS. However, if this assumption is relaxed, the resulting V-statistic estimator, i.e., $$\frac{1}{n}\sum^n_{i=1}k(\cdot,X_i)\oh k(\cdot,X_i)-\left(\frac{1}{n}\sum^n_{i=1}k(\cdot,X_i)\right)\oh \left(\frac{1}{n}\sum^n_{i=1}k(\cdot,X_i)\right)$$ is no longer unbiased. Since unbiasedness is crucial for a tighter analysis, we consider a U-statistic estimator of $\Sigma$ as shown in \eqref{Eq:sigh1} and develop the analysis based on Bernstein-type inequality for operator valued U-statistics, which we proposed in our earlier work \citep{sriperumbudur-20}. Thus, the current work provides a non-trivial extension to our previous results by relaxing a significant assumption of $\E_{X\sim\bb{P}} k(\cdot,X)=0$.

\noindent \emph{(ii)} In Section \ref{sec:main results lp}, to foster a comparison with random features approximation \citep{Rahimi-08a, sriperumbudur-20}---another popular kernel approximation---, we study the performance of NY-EKPCA in terms of the reconstruction error of the $\ell$-eigenspace in $\lp$. Comparing NY-EKPCA with EKPCA and random feature approximate EKPCA (RF-EKPCA), we show that NY-EKPCA again recovers the statistical performance of EKPCA with less computational complexity. However, unlike in $\Hk$ where the number of eigenfunctions, $\ell$ used in the reconstruction cannot be too large, the result in $\lp$ holds regardless of the number of eigenfunctions, $\ell$, used in the reconstruction. Additionally, we show the Nystr\"{o}m approximation to be superior to that of the random features approximation by showing that NY-EKPCA outperforms RF-EKPCA in terms of the reconstruction error while enjoying better computational complexity---similar observation was already made in the context of kernel ridge regression \citep{Rudi-15,Rudi-17}.

\subsection{Related Work}
The statistical behavior of EKPCA has been well studied. The statistical consistency of EKPCA is established by \cite{Shawe-Taylor-05}, where the reconstruction error of the empirical $\ell$-eigenspace is shown to converge at the rate $\sqrt{\ell/n}$. \cite{Blanchard-07} and \cite{Rudi-13} obtain improved rates by considering the decay rate of the eigenvalues of the covariance operator. With the exception of \cite{Blanchard-07}, these works consider uncentered KPCA, though empirical recentering of the data is often performed in practice.

\par Outside of the Nystr\"om method, several other approximation strategies have been proposed in the kernel methods literature. These include incomplete Cholesky \citep{Fine-01, Bach-05a}, random features \citep{Rahimi-08a}, sketching \citep{Yang-17},  and sparse greedy approximation \citep{Smola-00}. 
These methods, including Nystr\"om, offer significant reductions in computational complexity, and, empirically, have been shown to provide performance competitive to their more expensive counterparts without approximation \citep{rahimi08weighted, Kumar-09, Yang-12}. Theoretical analysis of the Nystr\"om method has primarily concerned the distance between the Gram matrix, $\textbf{K}$, and its low-rank approximation (\citealp{Drineas-05}; \citealp{Gittens-13}; \citealp{Jin-13}). Recent research has focused on the impact of Nystr\"om approximation in specific learning tasks, allowing one to observe the trade-off between statistical accuracy and computational complexity. The supervised setting has been studied heavily \citep{Alaoui-15, Rudi-15, Bach-13}, where it has been shown that Nystr\"om approximation can achieve best possible statistical performance with better computational complexity. Significantly less is known in the unsupervised setting; however, approximate KPCA has been studied \citep{Lopez-Paz-14, streaming_kpca}. \cite{sriperumbudur-20} show that approximate KPCA using random features achieves better computational complexity than EKPCA with no loss in statistical performance. 
Recently, \cite{Sterge-20} show that Nystr\"om approximate KPCA can yield computational benefit without statistical loss; however, as mentioned in Section \ref{sec:contributions}, their result hinges on the highly-restrictive assumption that $\E_{X\sim\Pb}k(\cdot,X)=0$.

\section{Definitions and Notation}\label{Sec:notation}
\indent For $\bm{a}:=(a_1,\ldots,a_d)\in\bb{R}^d$ and $\bm{b}:=(b_1,\ldots,b_d)\in\bb{R}^d$ define $\Vert
\bm{a}\Vert_2:=\sqrt{\sum^d_{i=1}a^2_i}$ and $\langle \bm{a},\bm{b}\rangle_2:=\sum^d_{i=1}a_ib_i$. 
$\bm{a}\otimes_2 \bm{b}:=\bm{a}\bm{b}^\top$ denotes the tensor product of $\bm{a}$ and $\bm{b}$. $\mathbf{I}_n$ denotes an $n\times n$ identity matrix and $\mathbf{1}_n=(1,\stackrel{n}{\hdots},1)^\top$. We define $\mathbf{C}_n=\mathbf{I}_n-\frac{1}{n}\mathbf{1}_n\mathbf{1}_n^\top$ and $\mathbf{H}_n=n\mathbf{C}_n$.  For a matrix $A\in\R^{n\times m}$, $A^+\in\R^{m\times n}$ denotes the \emph{Moore-Penrose generalized inverse} of $A$.  
$a\wedge b:=\min(a,b)$ and $a\vee b:=\max(a,b)$. Define $[n]:=\{1,\ldots,n\}$ for $n\in\bb{N}$. For constants $a$ and $b$, $a\lesssim b$ (\emph{resp.} $a\gtrsim b$) denotes that there exists a positive constant $c$ (\emph{resp.} $c'$) such that $a\le cb$ (\emph{resp.} $a\ge c'b$). For a random variable $A$ with law $P$ and a constant $b$, $A\lesssim_p b$ denotes that for any $\delta>0$, there exists a positive constant $c_\delta<\infty$ such that $P(A\le c_\delta b)\ge \delta$.

For $x,y\in
H$, a Hilbert space, $x\otimes_{H} y$ is an element of the tensor product space
$H\otimes H$ which can also be seen as an operator from $H$ to $H$ as
$(x\otimes_{H} y)z=x\langle y,z\rangle_{H}$ for any $z\in H$. $\alpha\in\bb{R}$ is called an \emph{eigenvalue} of a bounded self-adjoint operator $S$ 
if there exists an $x\ne 0$ such that $Sx=\alpha x$ and such an $x$ is called
the \emph{eigenvector}/\emph{eigenfunction} of $S$ and $\alpha$. An eigenvalue is said to be \emph{simple} if it has multiplicity one. For an operator $S:H\rightarrow H$, $\Vert S\Vert_{\Tr(H)}$,
$\Vert S\Vert_{\HS(H)}$ and $\Vert S\Vert_{\Cal{L}^\infty(H)}$ denote the trace, Hilbert-Schmidt and operator norms of $S$, respectively.

\section{Kernel PCA and its Variations}
\begin{assumption}\label{first kern assum} $(\Cal{X, B})$ is a completely separable space endowed with $\sigma$-algebra $\Cal{B}$.  $\Hk$ is a separable RKHS of $\R$-valued functions on $\X$ with bounded continuous positive definite kernel satisfying $\sup_{x\in\X}k(x,x)=:\kappa<\infty$.
\end{assumption}
The assumption that $\X$ is completely separable, also called second countable, ensures that $\Cal{B}$ is countably generated; therefore, $L^r(\X,\mu)$ is separable for any $\sigma$-finite measure $\mu$ on $\Cal{B}$ and $r\in[1,\infty)$ \citep[Proposition 3.4.5]{cohn-13}. The separability of $\Hk$ as well as $k$ being bounded and continuous guarantees that $k(\cdot,x):\X\rightarrow\Hk$ is Bochner-measurable for all $x\in\X$ \citep[Theorem 8]{dinculeanu-00}.

\subsection{Kernel PCA in the Population and Sample}
Kernel PCA \citep{Scholkopf-98} is an unsupervised learning method in which classical PCA is performed on data which has been mapped to a reproducing kernel Hilbert space.  That is, kernel PCA (KPCA) finds $f\in\Hk$ with unit norm such that $\Var[f(X)]=\E[f(X)-\E[f(X)]]^2$ is maximized.  Using the reproducing property, we have $\Var[f(X)]=\E[\inner{f}{k(\cdot,X)}_\Hk-\inner{f}{m_\Pb}_\Hk]^2$ where $m_\Pb\in\Hk$ is the unique \textit{mean element} of $\Pb$ in $\Hk$, defined for all $f\in\Hk$ by
\begin{equation}\label{Eq:mean element}
    \inner{f}{m_\Pb}_\Hk=\E[f(X)]=\E[\inner{f}{k(\cdot,X)}_\Hk]=\inner{f}{\int_\X k(\cdot,x)d\Pb(x)}_\Hk,
\end{equation}
where the last equality of (\ref{Eq:mean element}) holds via Riesz representation theorem \citep{Reed-80} and the boundedness of $k$ from Assumption \ref{first kern assum} which ensures $k(\cdot,X)$ is Bochner integrable \citep{Diestel-77} with respect to $\Pb$.  Therefore, we may write 
$\Var[f(X)]=\inner{f}{\Sigma f}_\Hk$
where
\begin{equation}\label{Def:Sigma}
    \Sigma=\int_\X k(\cdot,x)\oh k(\cdot,x)d\Pb(x)-m_\Pb\oh m_\Pb,
\end{equation}
is the covariance operator on $\Hk$ associated with $\Pb$. Thus, the KPCA problem may be expressed as
\begin{equation}\label{kpca obj}
    \sup\{\inner{f}{\Sigma f}_\Hk : f\in\Hk,\,\norm{f}_\Hk=1\},
\end{equation}
bearing a strong resemblance to classical PCA.  In fact, KPCA can be seen as a generalization of classical linear PCA, as taking $\Hk=\R^d$ with $k(x,y)=x^\top y$ yields classical PCA with covariance matrix $\Sigma=\E[XX^\top]-\E[X]\E[X]^\top$.  The boundedness of $k$ in Assumption~\ref{first kern assum} ensures that $\Sigma$ is trace class and thus compact.  Since $\Sigma$ is positive and self-adjoint, the spectral theorem \citep{Reed-80} gives
\begin{equation*}
    \Sigma=\sum_{i\in I}\lambda_i\phi_i\oh\phi_i,\nonumber
\end{equation*}
where $(\lambda_i)_{i\in I}\subset\R^+$  and $(\phi_i)_{i\in I}$ are the eigenvalues and eigenfunctions, respectively, of $\Sigma$.  $(\phi_i)_{i\in I}$ form an orthonormal system spanning $\overline{\Cal{R}(\Sigma)}$, where the index set $I$ is either finite or countable, in which case $\lambda_i\rightarrow 0$ as $i\rightarrow\infty$.  The solution to (\ref{kpca obj}) is simply the eigenfunction of $\Sigma$ corresponding to its largest eigenvalue. We make the following simplifying assumption for ease of presentation.
\begin{assumption}\label{Sigma eigen assum}
The eigenvalues $(\lambda_i)_{i\in I}$ of $\Sigma$ are simple, positive, and w.l.o.g.~satisfy a decreasing rearrangement, i.e., $\lambda_1>\lambda_2>\ldots$\vspace{-2mm}
\end{assumption}
\noindent Assumption \ref{Sigma eigen assum} allows one to express the orthogonal projection operator onto the $\ell$-eigenspace of $\Sigma$, i.e. span$\{(\phi_i)_{i=1}^\ell\}$, as \vspace{-2mm}
\begin{equation}\label{kpca projector}
P^\ell(\Sigma)=\sum_{i=1}^\ell\phi_i\oh\phi_i.
\end{equation}

The above construction
corresponds to population version when the data distribution $\Pb$ is known.  In practice, the knowledge of $\Pb$ is available only through the sample $\{X_i\}_{i=1}^n\stackrel{i.i.d.}{\sim}\Pb$. Therefore, performing KPCA in practice requires one to replace $\Sigma$ in (\ref{Def:Sigma}) with an estimate.  In the vast majority of the literature (e.g., \citealt{Scholkopf-98, Shawe-Taylor-05}), the assumption $m_\Pb=0$ is made and the corresponding V-statistic estimator of $\Sigma$, i.e., $\frac{1}{n}\sum_{i=1}^n k(\cdot,X_i)\oh k(\cdot,X_i)$ is used.  However, the assumption $m_\Pb=0$ is quite restrictive as many popular kernels, such as the Gaussian, do not satisfy this condition.  
Therefore, to make the setting and results more general, we make no such assumption on $m_\Pb$; however, this relaxation causes the resultant V-statistic estimator to be biased. To mediate the technical difficulties arising from a biased estimator, we choose the U-statistic estimator,
\begin{equation*}
    \sigh=\frac{1}{2n(n-1)}\sum_{i\neq j}^n (k(\cdot,X_i)-k(\cdot,X_j))\oh(k(\cdot,X_i)-k(\cdot,X_j)),\nonumber
\end{equation*}
conceived from the following alternate representation of $\Sigma$:
$$\Sigma=\frac{1}{2}\int_{\X\times\X}\left(k(\cdot,x)-k(\cdot,y)\right)\oh\left(k(\cdot,x)-k(\cdot,y)\right).$$
Using the reproducing property, it is easy to verify that $$\widehat{\text{Var}}[f(X)]:=\frac{1}{2n(n-1)}\sum_{i\neq j}^n\left(f(X_i)-f(X_j)\right)^2=\inner{\sigh f}{f}_\Hk.$$ Therefore, substituting for $\Sigma$ in (\ref{kpca obj}) yields the objective of empirical KPCA (EKPCA):
\begin{equation}\label{ekpca obj}
    \sup\{\langle f,\sigh f\rangle_\Hk:f\in\Hk,\,\norm{f}_\Hk=1\}.
\end{equation}
Of course $\sigh$ is self-adjoint, positive and has rank at most $n-1$, thus is compact. Thus the spectral theorem \citep{Reed-80} yields
\begin{equation}\label{sigh spectral}
    \sigh=\sum_{i=1}^{n-1}\widehat{\lambda}_i\widehat{\phi}_i\oh\widehat{\phi}_i,
\end{equation}
where $\{\widehat{\lambda}_i\}_{i=1}^{n-1}\subset\R^+$ and $\{\widehat{\phi}_i\}_{i=1}^{n-1}\subset\Hk$ are the eigenvalues and eigenfunctions of $\sigh$. Similar to Assumption \ref{Sigma eigen assum}, we make the following simplifying assumption regarding the spectrum of $\sigh$.
\begin{assumption}\label{sigma_hat eigen assum}
The rank of $\sigh$ is $n-1$, eigenvalues $(\widehat{\lambda}_i)_{i=1}^{n-1}$ of $\sigh$ are simple, positive and w.l.o.g.~satisfy a decreasing rearrangement, i.e., $\widehat{\lambda}_1>\widehat{\lambda}_2>\ldots$
\end{assumption}
For any $\ell\le n-1$, since $\{\phih_i\}_{i=1}^{\ell}$ forms an orthogonal coordinate system in $\Hk$, it yields the following low-dimensional Euclidean representation of $k(\cdot,x)$,
\begin{equation*}
    \left(\inner{k(\cdot,x)}{\phih_1}_\Hk,\cdots,\inner{k(\cdot,x)}{\phih_\ell}_\Hk\right)^\top=\left(\phih_1(x),\cdots,\phih_\ell(x)\right)^\top,\nonumber
\end{equation*}
for any $x\in X$. Moreover, following Assumption \ref{sigma_hat eigen assum}, the orthogonal projector onto $\text{span}\{\phih_i:i=1,\ldots,\ell\}$ is given by
\begin{equation}\label{ekpca projector}
    P^\ell(\sigh)=\sum_{i=1}^\ell\phih_i\oh\phih_i.
\end{equation}
Though $\sigh$ is finite rank, its eigenfunctions are solution to a possibly infinite dimensional linear system. The following result, quoted from \cite{sriperumbudur-20}, shows that the eigenvalues of $\sigh$ can computed by solving an $n$-dimensional system. 
\begin{proposition}[\citealt{sriperumbudur-20}, Proposition 1]\label{prop:ekpca soln}
Let $(\lambdah_i,\phih_i)_i$ be the eigensystem of $\sigh$ in (\ref{sigh spectral}). Define $\mathbf{K}=[k(X_i,X_j)]_{i,j\in[n]}$. Then
$$\phih_i=\frac{1}{\lambdah_i}\sum_{j=1}^n\gamma_{i,j}k(\cdot,X_j),$$
where $\boldsymbol{\gamma}_i=(\gamma_{i,1},\cdots,\gamma_{i,n})=\frac{1}{n(n-1)}\mathbf{H}_n\boldsymbol{\widehat{\alpha}}_i$ with $\boldsymbol{\widehat{\alpha}}_i\notin\emph{null}(\mathbf{H}_n)$, and $(\lambdah_{i},\boldsymbol{\widehat{\alpha}}_i)_i$ are the eigenvalues and eigenvectors of $\frac{1}{n(n-1)}\mathbf{KH}_n$.
\end{proposition}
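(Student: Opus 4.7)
The plan is to reduce the infinite-dimensional eigenvalue problem for $\widehat{\Sigma}$ to a finite-dimensional matrix eigenvalue problem by exploiting the fact that $\widehat{\Sigma}$ has range contained in $\mathrm{span}\{k(\cdot,X_j):j\in[n]\}$. The central tool is the sampling operator $S_n:\Hk\to\R^n$, $(S_n f)_i=f(X_i)=\inner{f}{k(\cdot,X_i)}_\Hk$, whose adjoint is $S_n^\ast \boldsymbol{\alpha}=\sum_{j=1}^n \alpha_j\,k(\cdot,X_j)$; note that $S_n S_n^\ast=\mathbf{K}$ and $S_n^\ast S_n=\sum_{i=1}^n k(\cdot,X_i)\oh k(\cdot,X_i)$.

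First, I would rewrite $\widehat{\Sigma}$ in the compact form $\widehat{\Sigma}=\frac{1}{n(n-1)}S_n^\ast \mathbf{H}_n S_n$. To see this, expand the U-statistic:
\begin{equation*}
\widehat{\Sigma}=\frac{1}{n-1}\sum_{i=1}^n k(\cdot,X_i)\oh k(\cdot,X_i)-\frac{1}{n(n-1)}\sum_{i,j=1}^n k(\cdot,X_i)\oh k(\cdot,X_j),
\end{equation*}
which equals $\frac{1}{n-1}S_n^\ast(\mathbf{I}_n-\tfrac{1}{n}\mathbf{1}_n\mathbf{1}_n^\top)S_n=\frac{1}{n-1}S_n^\ast \mathbf{C}_n S_n=\frac{1}{n(n-1)}S_n^\ast \mathbf{H}_n S_n$, using $\mathbf{H}_n=n\mathbf{C}_n$.

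Next, fix any eigenpair $(\widehat{\lambda}_i,\widehat{\phi}_i)$ of $\widehat{\Sigma}$ with $\widehat{\lambda}_i>0$. Since $\widehat{\Sigma}\widehat{\phi}_i=\widehat{\lambda}_i\widehat{\phi}_i$ forces $\widehat{\phi}_i\in\mathrm{Range}(\widehat{\Sigma})\subseteq\mathrm{Range}(S_n^\ast)$, we may write $\widehat{\phi}_i=S_n^\ast\boldsymbol{\beta}_i$ for some $\boldsymbol{\beta}_i\in\R^n$. Substituting into the eigenvalue equation and applying $S_n$ on the left gives
\begin{equation*}
\tfrac{1}{n(n-1)}\mathbf{K}\mathbf{H}_n\mathbf{K}\boldsymbol{\beta}_i=\widehat{\lambda}_i\mathbf{K}\boldsymbol{\beta}_i.
\end{equation*}
Setting $\widehat{\boldsymbol{\alpha}}_i:=\mathbf{K}\boldsymbol{\beta}_i$ yields $\frac{1}{n(n-1)}\mathbf{K}\mathbf{H}_n\widehat{\boldsymbol{\alpha}}_i=\widehat{\lambda}_i\widehat{\boldsymbol{\alpha}}_i$, so $\widehat{\boldsymbol{\alpha}}_i$ is an eigenvector of $\frac{1}{n(n-1)}\mathbf{KH}_n$ at eigenvalue $\widehat{\lambda}_i$; moreover $\mathbf{H}_n\widehat{\boldsymbol{\alpha}}_i\neq 0$, otherwise $\widehat{\lambda}_i\widehat{\boldsymbol{\alpha}}_i=0$, contradicting $\widehat{\lambda}_i>0$. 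Finally, returning to $\widehat{\phi}_i=S_n^\ast\boldsymbol{\beta}_i=\tfrac{1}{\widehat{\lambda}_i}\widehat{\Sigma}\widehat{\phi}_i=\tfrac{1}{\widehat{\lambda}_i}\cdot\tfrac{1}{n(n-1)}S_n^\ast \mathbf{H}_n\mathbf{K}\boldsymbol{\beta}_i=\tfrac{1}{\widehat{\lambda}_i}S_n^\ast\boldsymbol{\gamma}_i$ with $\boldsymbol{\gamma}_i=\tfrac{1}{n(n-1)}\mathbf{H}_n\widehat{\boldsymbol{\alpha}}_i$, which unpacks to the claimed expression $\widehat{\phi}_i=\tfrac{1}{\widehat{\lambda}_i}\sum_j\gamma_{i,j}k(\cdot,X_j)$.

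The only real obstacle is the bookkeeping in the first step (verifying the U-statistic $\widehat{\Sigma}$ collapses to $\frac{1}{n(n-1)}S_n^\ast\mathbf{H}_n S_n$ via careful summation over $i\neq j$); the rest is standard linearization by passing between $\mathcal{H}$ and $\R^n$ through $S_n$ and its adjoint. Note that $\widehat{\phi}_i$ also needs to be normalized in $\mathcal{H}$, which translates into the normalization $\widehat{\lambda}_i=\boldsymbol{\gamma}_i^\top\mathbf{K}\boldsymbol{\gamma}_i/\widehat{\lambda}_i^2$; this follows automatically from $\|\widehat{\phi}_i\|_\Hk=1$, but is a detail worth mentioning in a full write-up.
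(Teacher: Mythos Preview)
Your argument is correct and matches the approach implicit in the paper. The paper does not actually prove this proposition here---it is quoted verbatim from \cite{sriperumbudur-20}---but the sampling-operator identities you rely on ($S_n^*\boldsymbol{\alpha}=\sum_j\alpha_j k(\cdot,X_j)$, $S_nS_n^*=\mathbf{K}$, and $\widehat{\Sigma}=\tfrac{1}{n(n-1)}S_n^*\mathbf{H}_nS_n$) are exactly what the paper records in Lemma~\ref{lem:sampling op}, so your reduction to the $n\times n$ eigenproblem for $\tfrac{1}{n(n-1)}\mathbf{KH}_n$ via $S_n$ is the intended route.

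One small point worth tightening: when you argue $\mathbf{H}_n\widehat{\boldsymbol{\alpha}}_i\neq 0$, you conclude from $\widehat{\lambda}_i\widehat{\boldsymbol{\alpha}}_i=0$ that this contradicts $\widehat{\lambda}_i>0$, but that requires $\widehat{\boldsymbol{\alpha}}_i\neq 0$. This follows because $\widehat{\boldsymbol{\alpha}}_i=\mathbf{K}\boldsymbol{\beta}_i=S_nS_n^*\boldsymbol{\beta}_i=S_n\widehat{\phi}_i$, and $S_n$ is injective on $\mathrm{Range}(S_n^*)$ (since $S_nS_n^*\boldsymbol{\beta}=0$ implies $\boldsymbol{\beta}^\top\mathbf{K}\boldsymbol{\beta}=\|S_n^*\boldsymbol{\beta}\|_\Hk^2=0$), so $\widehat{\phi}_i\neq 0$ forces $\widehat{\boldsymbol{\alpha}}_i\neq 0$.
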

As computation of the eigensystem of $\sigh$ is obtained by solving an $n\times n$ system, computation of $(\lambdah_i,\phih_i)_{i=1}^\ell$ for $\ell<n$ has a space complexity $O(n^2)$ and a time complexity of $O(n^2\ell)$ via the Lanczos method.

\subsection{Approximate Kernel PCA using the Nystr\"om Method}\label{sec:NY-EKPCA}
For large sample sizes, performing EKPCA amounts to a significant computational burden, motivating many approximation schemes.  We explore the popular Nystr\"om approximation \citep{Reinhardt-85,Williams-01,Drineas-05} to speed up EKPCA and study the trade-offs between  computational gains  and  statistical accuracy. 
The general idea in Nystr\"om method is to obtain a low-rank approximation to the Gram matrix $\mathbf{K}$, and replace $\mathbf{K}$ by this approximation in kernel algorithms, resulting in computational speedup. Since the eigenspace of $\mathbf{KH}_n$ is related to that of $\sigh$ (as noted in Proposition~\ref{prop:ekpca soln}), Nystr\"{o}m method also yields a low rank approximation to $\sigh$, which is what we exploit in developing Nystr\"om approximate KPCA.  It follows from Proposition \ref{prop:ekpca soln} that the eigenfunctions of $\sigh$ lie in the following space,
\begin{eqnarray*}
\bar{\Hk}_n:=\left\{f\in\Hk\Big{|}f=\sum_{i=1}^n\left(n\boldsymbol{\alpha}_i-\sum_{j=1}^n\boldsymbol{\alpha}_j\right)k(\cdot,X_i):\boldsymbol{\alpha}=(\alpha_1,\ldots,\alpha_n)\in\R\right\}.
\nonumber
\end{eqnarray*}
Thus, we could instead express the objective in (\ref{ekpca obj}) as an optimization over $\bar{\Hk}_n$, or equivalently, over $\boldsymbol{\alpha}\in\R^n$.  From this, suppose for some $m<n$ indices $\{r_1,...,r_m\}$ are sampled uniformly without replacement from $[n]$, yielding the subsample $\{X_{r_j}\}_{j=1}^m$ and the random subspace
\begin{eqnarray*}
\bar{\Hk}_m:=\left\{f\in\Hk\Big{|}f=\sum_{j=1}^m\left(m\boldsymbol{\alpha}_j-\sum_{l=1}^m\boldsymbol{\alpha}_l\right)k(\cdot,X_{r_j}):\boldsymbol{\alpha}=(\alpha_1,\ldots,\alpha_m)\in\R\right\}\nonumber
\end{eqnarray*}
of $\Hk$. Nystr\"om KPCA (NY-EKPCA) optimizes the EKPCA objective in (\ref{ekpca obj}) over $\bar{\Hk}_m$, that is, NY-EKPCA is the solution to the following problem:
\begin{equation}\label{NY-EKPCA problem defin}
\sup\left\{\inner{f}{\sigh f}_\Hk:f\in\bar{\Hk}_m,\,\norm{f}_\Hk=1\right\}.
\end{equation}
The following result, whose proof is presented in  Section~\ref{sec:NY-EKPCA soln}, shows that the solution to (\ref{NY-EKPCA problem defin}) is obtained by solving a finite dimensional linear system, which has better computational complexity than that of EKPCA, provided the subsample size is less than the sample size, $m<n$. To this end, we first introduce some notation before stating the result,
\begin{eqnarray*}
    &{}\mathbf{K}_{mm}{}&=[k(X_{r_j},X_{r_l})]_{j,l\in[m]}\in\R^{m\times m},\quad\mathbf{K}_{nm}=[k(X_i,X_{r_j})]_{i\in[n],j\in[m]}\in\R^{n\times m},\quad
    \text{and}\quad\mathbf{K}_{mn}=\mathbf{K}_{nm}^\top.\nonumber
\end{eqnarray*}
\begin{proposition}\label{prop:NY-EKPCA soln}
Define the $m\times m$ matrix $\mathbf{M}=\mathbf{K}_{mm}^{-1/2}\mathbf{K}_{mn}\mathbf{H}_n\mathbf{K}_{nm}\mathbf{K}_{mm}^{-1/2}$.  The solution to (\ref{NY-EKPCA problem defin}) is given by $$\phitild_1=\Tilde{S}_m^*\mathbf{K}_{mm}^{-1/2}\mathbf{u}_1$$
where $\mathbf{u}_1=(u_{1,1},\ldots,u_{1,m})$ is the unit eigenvector of $\frac{1}{n(n-1)}\mathbf{M}$ corresponding to its largest eigenvalue, denoted $\lambdatild_1$ and $\tilde{S}_m^*:\R^m\rightarrow\Hk$, $\boldsymbol{\alpha}\mapsto\sum_{j=1}^m\boldsymbol{\alpha}_jk(\cdot,X_{r_j})$.
\end{proposition}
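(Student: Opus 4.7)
The plan is to recast the Nystr\"om objective as a finite-dimensional generalized Rayleigh quotient, then whiten to reduce it to a standard symmetric eigenvalue problem. All of the work happens in $\R^m$ once the variational problem has been translated through the sampling operator $\tilde S_m^*$.

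First, I would fix the parametrization. Every $f\in\bar{\Hk}_m$ may be written as $f=\tilde S_m^*\boldsymbol{\beta}$ for some $\boldsymbol{\beta}\in\R^m$ (taking $\boldsymbol{\beta}=\mathbf{H}_m\boldsymbol{\alpha}$ when using the definition of $\bar{\Hk}_m$). Using $\tilde S_m \tilde S_m^*=\mathbf{K}_{mm}$ (a direct computation from the reproducing property), the norm constraint becomes
\begin{equation*}
\|f\|_\Hk^2=\langle \tilde S_m^*\boldsymbol{\beta},\tilde S_m^*\boldsymbol{\beta}\rangle_\Hk=\boldsymbol{\beta}^\top\mathbf{K}_{mm}\boldsymbol{\beta}.
\end{equation*}
For the objective, I would first observe that the U-statistic $\sigh$ admits the compact factorization $\sigh=\frac{1}{n(n-1)}S_n^*\mathbf{H}_nS_n$, where $S_n^*:\R^n\to\Hk$ sends $\boldsymbol{a}\mapsto\sum_i a_ik(\cdot,X_i)$ and $S_n$ is the sampling operator $f\mapsto(f(X_i))_{i=1}^n$. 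This follows from the identity $\sum_{i\neq j}(f(X_i)-f(X_j))^2=2\mathbf{f}^\top\mathbf{H}_n\mathbf{f}$ with $\mathbf{f}=S_n f$. Plugging in $f=\tilde S_m^*\boldsymbol{\beta}$ and using $S_n\tilde S_m^*\boldsymbol{\beta}=\mathbf{K}_{nm}\boldsymbol{\beta}$ gives
\begin{equation*}
\langle f,\sigh f\rangle_\Hk=\tfrac{1}{n(n-1)}\boldsymbol{\beta}^\top\mathbf{K}_{mn}\mathbf{H}_n\mathbf{K}_{nm}\boldsymbol{\beta}.
\end{equation*}

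Next I would whiten. Assuming (as is standard for Nystr\"om) that $\mathbf{K}_{mm}$ is strictly positive definite so that $\mathbf{K}_{mm}^{-1/2}$ exists, substitute $\mathbf{v}=\mathbf{K}_{mm}^{1/2}\boldsymbol{\beta}$. The constraint collapses to $\|\mathbf{v}\|_2=1$ while the objective becomes
\begin{equation*}
\tfrac{1}{n(n-1)}\mathbf{v}^\top\mathbf{K}_{mm}^{-1/2}\mathbf{K}_{mn}\mathbf{H}_n\mathbf{K}_{nm}\mathbf{K}_{mm}^{-1/2}\mathbf{v}=\tfrac{1}{n(n-1)}\mathbf{v}^\top\mathbf{M}\mathbf{v}.
\end{equation*}
Since $\mathbf{M}$ is symmetric positive semidefinite, the Courant–Fischer (Rayleigh–Ritz) principle yields the supremum $\lambdatild_1=\frac{1}{n(n-1)}\lambda_{\max}(\mathbf{M})$, attained at $\mathbf{v}=\mathbf{u}_1$, the unit top eigenvector of $\frac{1}{n(n-1)}\mathbf{M}$. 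Unwinding the substitution gives $\boldsymbol{\beta}=\mathbf{K}_{mm}^{-1/2}\mathbf{u}_1$ and hence $\phitild_1=\tilde S_m^*\mathbf{K}_{mm}^{-1/2}\mathbf{u}_1$, as claimed.

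The main obstacle I anticipate is the bookkeeping in Step~1: the parametrization $f=\tilde S_m^*\boldsymbol{\beta}$ must be reconciled with the fact that $\bar{\Hk}_m$ is cut out by $\boldsymbol{\beta}\in\mathrm{range}(\mathbf{H}_m)$, so that allowing $\mathbf{v}$ (equivalently $\boldsymbol{\beta}$) to range freely over $\R^m$ in the Rayleigh step is legitimate; this amounts either to verifying that the unconstrained maximizer automatically satisfies the linear constraint or to working throughout on the image subspace with the appropriate pseudo-inverse. Aside from this, the only analytic subtlety is the invertibility of $\mathbf{K}_{mm}$, which is harmless under strict positive definiteness of $k$ on distinct subsample points; the rest of the argument is a routine factorization plus Rayleigh–Ritz.
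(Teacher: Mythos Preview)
Your proposal is correct and follows essentially the same route as the paper: parametrize $f=\tilde S_m^*\mathbf{H}_m\boldsymbol{\alpha}$, rewrite both the constraint and the objective via $\tilde S_m\tilde S_m^*=\mathbf{K}_{mm}$ and $\sigh=\tfrac{1}{n(n-1)}S_n^*\mathbf{H}_nS_n$, whiten by setting $\mathbf{u}=\mathbf{K}_{mm}^{1/2}\mathbf{H}_m\boldsymbol{\alpha}$, and invoke Rayleigh--Ritz on $\tfrac{1}{n(n-1)}\mathbf{M}$. The range-constraint issue you flag is exactly the point the paper records by writing the supremum over $\mathbf{u}\in\mathrm{ran}(\mathbf{K}_{mm}^{1/2}\mathbf{H}_m)$, and like you it then passes to the unconstrained top eigenvector without further comment; your awareness of this bookkeeping matches the paper's treatment.
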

The complexity of computing $\mathbf{M}$ and its eigendecomposition via the Lanczos method is $O(m^2\ell+nm^2)$; therefore, for $m<n$ the complexity of solving (\ref{NY-EKPCA problem defin}) scales as $O(nm^2)$, which is a reduction from the $O(n^2\ell)$ complexity of solving EKPCA if $m<\sqrt{n\ell}$. It is worth noting the connection between our Nystr\"om approximation to KPCA and the traditional Nystr\"om approximation to the Gram matrix of \cite{Williams-01}, given by 
\begin{equation*}
    \tilde{\mathbf{K}}=\mathbf{K}_{nm}\mathbf{K}_{mm}^{-1}\mathbf{K}_{mn}.\nonumber
\end{equation*}
Observing
\begin{eqnarray*}
    &{}{}&\mathbf{Mu}=\mathbf{K}_{mm}^{-1/2}\mathbf{K}_{mn}\mathbf{H}_n\mathbf{K}_{nm}\mathbf{K}_{mm}^{-1/2}\mathbf{u}=n(n-1)\lambdatild\mathbf{u}\nonumber
    \\
    &{}\implies{}&\Tilde{\mathbf{K}}\mathbf{H}_n\mathbf{K}_{nm}\mathbf{K}_{mm}^{-1/2}\mathbf{u}=n(n-1)\lambdatild\mathbf{K}_{nm}\mathbf{K}_{mm}^{-1/2}\mathbf{u},\nonumber
\end{eqnarray*}
it is clear that $\tilde{\mathbf{K}}\mathbf{H}_n$ will have the same eigenvalues as $\mathbf{M}$. All eigenvalues of $\tilde{\mathbf{K}}\mathbf{H}_n$ and $\mathbf{M}$ will be positive as 
\begin{eqnarray*}
\mathbf{u}^\top\mathbf{M}\mathbf{u}&{}={}&\inner{\mathbf{K}_{mm}^{-1/2}\mathbf{K}_{mn}\mathbf{H}_n\mathbf{K}_{nm}\mathbf{K}_{mm}^{-1/2}\mathbf{u}}{\mathbf{u}}_2=n(n-1)\inner{\sigh\Tilde{S}_m^*\mathbf{K}_{mm}^{-1/2}\mathbf{u}}{\Tilde{S}_m^*\mathbf{K}_{mm}^{-1/2}\mathbf{u}}_\Hk\ge0.\nonumber
\end{eqnarray*}
We will make the following simplifying assumption on $\tilde{\mathbf{K}}\mathbf{H}_n$ and its eigenvalues:
\begin{assumption}\label{cent nystrom eigen assum}
The rank of $\Tilde{\mathbf{K}}\mathbf{H}_n$ is $m$.  The eigenvalues $(\lambdatild_i)_{i=1}^{m-1}$ of $\frac{1}{n(n-1)}\Tilde{\mathbf{K}}\mathbf{H}_n$ are simple, positive, and w.l.o.g. satisfy a decreasing rearrangement, i.e., $\lambdatild_1>\lambdatild_2>\ldots$. 
\end{assumption}
As shown in the proof of Proposition \ref{prop:NY-EKPCA soln} (see Section \ref{sec:NY-EKPCA soln}), $(\phitild_i)_i$ form an orthonormal system.  Thus, for some $\ell<m$ the orthogonal projector onto $\text{span}\{(\phitild_i)_{i=1}^\ell\}$ is given by
\begin{equation}\label{nystrom ekpca projector}
    P_{nys}^\ell(\sigh)=\sum_{i=1}^\ell\phitild_i\oh\phitild_i.
\end{equation}
One may ask if $\phitild_1$ is the eigenfunction of some operator.  Denoting $\bar{P}_m$ as the orthogonal projector onto $\bar{\Hk}_m$, it can be shown (see Section \ref{sec:P_m bar}) that
\begin{equation}\label{eq:P_m bar}
    \bar{P}_m=\Tilde{S}_m^*\mathbf{H}_m(\mathbf{H}_m\mathbf{K}_{mm}\mathbf{H}_m)^+\mathbf{H}_m\Tilde{S}_m,
\end{equation} 
and that $(\phitild_i)_i$ are the orthonormal eigenfunctions of $\bar{P}_m\sigh \bar{P}_m$ with corresponding eigenvalues $(\lambdatild_i)_i$, that is,
\begin{equation}\label{eq:P_m eigen}
    \bar{P}_m\sigh \bar{P}_m\phitild_i=\lambdatild_i\phitild_i.
\end{equation}
Therefore, we may think of $\bar{P}_m\sigh \bar{P}_m$ as a low-rank approximation to $\sigh$.

\section{Computational vs. Statistical Trade-off: Main Results}\label{sec:NY-EKPCA main results}
As shown previously, Nystr\"om kernel PCA approximates the solution to empirical kernel PCA with less computational expense. In this section, we explore whether this computational saving is obtained at the expense of statistical performance. As in \citet{sriperumbudur-20}, we measure the statistical performance of KPCA, EKPCA, and NY-EKPCA in terms of reconstruction error, which is detailed below. 

\subsection{Reconstruction error in $\Hk$-norm}
In linear PCA, the reconstruction error, given by
\begin{equation*}
    \bb{E}_{X\sim\Pb}\norm{(X-\mu)-\sum_{i=1}^\ell\inner{X-\mu}{\phi_i}_2\phi_i}_2^2=\bb{E}_{X\sim\Pb}\norm{(X-\mu)-P^\ell(\Sigma)(X-\mu)}_2^2,\nonumber
    \end{equation*}
is the error involved in reconstructing a centered random variable $X$ by projecting it onto the $\ell$-eigenspace (i.e., span of the top-$\ell$ eigenvectors) associated with its covariance matrix, $\Sigma=\bb{E}[XX^\top]-\E[X]\E[X]^\top$ through the orthogonal projection operator $P^\ell(\Sigma):=\sum^\ell_{i=1}\phi_i\otimes_2 \phi_i$. Clearly, the error is zero when $\ell=d$. The analogs of the reconstruction error in KPCA, EKPCA and NY-EKPCA, can be similarly stated in terms of their projection operators, (\ref{kpca projector}), (\ref{ekpca projector}), and (\ref{nystrom ekpca projector}) as follows:
\begin{eqnarray}\label{cent rec err}
    R^\ell(\Sigma)&{}={}&\E\norm{(k(\cdot,X)-m_\Pb)-P^\ell(\Sigma)(k(\cdot,X)-m_\Pb)}_\Hk^2,\nonumber 
    \\
    R^\ell(\sigh)&{}={}&\E\norm{(k(\cdot,X)-m_\Pb)-P^\ell(\sigh)(k(\cdot,X)-\widehat{m}_\Pb)}_\Hk^2,\label{cent rec err:ekpca}
    \\
     R_{nys}^\ell(\sigh)&{}={}&\E\norm{(k(\cdot,X)-m_\Pb)-P^\ell_{nys}(\sigh)(k(\cdot,X)-\widehat{m}_\Pb)}_\Hk^2\label{cent rec err:nykpca}.
\end{eqnarray}
In (\ref{cent rec err:ekpca}) and (\ref{cent rec err:nykpca}) we center with $\widehat{m}_\Pb:=\frac{1}{n}\sum^n_{i=1}k(\cdot,X_i)$ to ensure that the low-dimensional representation of the reconstructed random variable is computable in practice, as the mean element $m_\Pb$ is likely unknown.  Throughout the rest of the analysis we will drop the $X\sim\Pb$ subscript, and assume expectations are with respect to $X\sim\Pb$ unless otherwise noted.  The following theorem, proved in Section \ref{sec:nys main theorem proof}, provides a finite-sample bound on the reconstruction error associated with NY-EKPCA, under uniform sampling, 
as well as a new result for centered EKPCA, from which convergence rates may be obtained.
\begin{theorem}\label{thm:nys main theorem}
Suppose Assumptions \ref{first kern assum}-\ref{cent nystrom eigen assum} hold.  For any $t>0$ define $\Cal{N}_\Sigma(t)=\emph{tr}(\Sigma(\Sigma+tI)^{-1})$ and $\Cal{N}_{C,\infty}(t)=\sup_{x\in\X}\inner{k(\cdot,x)}{(C+tI)^{-1}k(\cdot,x)}_\Hk$ for the uncentered covariance operator $C=\int_\X k(\cdot,x)\oh k(\cdot,x)d\Pb(x)$. Then the following hold:
\begin{itemize}
    \item[(i)] 
    \begin{equation}
        R^\ell(\Sigma)=\sum_{i>\ell}\lambda_i.\nonumber
    \end{equation}
    \item[(ii)] For any $0\le\delta\le\frac{1}{2}$ satisfying $n\ge2\log\frac{2}{\delta}$ and $\frac{140\kappa}{n}\log\frac{16\kappa n}{\delta}\le t\le\norm{\Sigma}_\OPH$,
    $$\Pb^n\left\{\sum_{i>\ell}\lambda_i\le R^\ell(\sigh)\le3\Cal{N}_\Sigma(t)(\lambda_{\ell+1}+t)+\frac{32\kappa\log\frac{2}{\delta}}{n}\right\}\ge1-5\delta.$$
    \item[(iii)] For any $0\le\delta\le\frac{1}{2}$,
    $$\Pb^n\left\{\sum_{i>\ell}\lambda_i\le R_{nys}^\ell(\sigh)\le6\Cal{N}_\Sigma(t)(\lambda_{\ell+1}+9t)+\frac{32\kappa\log\frac{2}{\delta}}{n}\right\}\ge 1-11\delta,$$
    provided the following conditions are satisfied:
    \begin{itemize}
        \item[1.] $\left(\frac{140\kappa}{n}\log\frac{16\kappa n}{\delta}\vee\frac{9\kappa}{n}\log\frac{n}{\delta}\right)\le t\le \norm{\Sigma}_\OPH\wedge\norm{C}_\OPH$, 
        \item[2.] $m\ge\left(67\vee5\Cal{N}_{C,\infty}(t)\right)\log\frac{4\kappa}{t\delta}\vee\frac{140\kappa}{t}\log\frac{8}{t\delta}$,
        \item[3.] $n\ge2\log\frac{2}{\delta}$.
    \end{itemize}
\end{itemize}
\end{theorem}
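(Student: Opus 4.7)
The key observation is the following orthogonal decomposition, valid for any finite-rank orthogonal projection $P$ on $\Hk$. Writing $\xi:=k(\cdot,X)-m_\Pb$ and $\mu:=m_\Pb-\widehat{m}_\Pb$ (deterministic given the data), the quantity being reconstructed becomes $\xi+\mu$, and since $P(I-P)=0$ and $\E\xi=0$,
\begin{equation*}
\E\norm{\xi-P(\xi+\mu)}_\Hk^2=\text{tr}((I-P)\Sigma(I-P))+\norm{P\mu}_\Hk^2.
\end{equation*}
This specializes to each of the three projections in (i)--(iii) (with $\mu=0$ for (i)), and part (i) follows immediately by writing $I-P^\ell(\Sigma)=\sum_{i>\ell}\phi_i\oh\phi_i$ and using $\Sigma\phi_i=\lambda_i\phi_i$.

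The lower bounds in (ii) and (iii) reduce to the statement $\text{tr}((I-Q)\Sigma(I-Q))\ge\sum_{i>\ell}\lambda_i$ for every rank-$\ell$ orthogonal projection $Q$, which is a direct consequence of the Courant--Fischer variational characterization of the tail sum. For the upper bound in (ii), the plan is to regularize via the Hilbert--Schmidt bound $\norm{AB}_\HS\le\norm{A}_\HS\norm{B}_\op$ with $A=\Sigma^{1/2}(\sigh+tI)^{-1/2}$ and $B=(\sigh+tI)^{1/2}(I-P^\ell(\sigh))$, yielding
\begin{equation*}
\text{tr}((I-P^\ell(\sigh))\Sigma(I-P^\ell(\sigh)))=\norm{\Sigma^{1/2}(I-P^\ell(\sigh))}_{\HS}^2\le\text{tr}(\Sigma(\sigh+tI)^{-1})\cdot\norm{(\sigh+tI)^{1/2}(I-P^\ell(\sigh))}_{\op}^2.
\end{equation*}
Since $P^\ell(\sigh)$ commutes with $\sigh+tI$, the operator-norm factor equals $\lambdah_{\ell+1}+t$, which by Weyl's inequality is at most $\lambda_{\ell+1}+t+\norm{\sigh-\Sigma}_\op$. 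For the trace factor, if $\norm{(\Sigma+tI)^{-1/2}(\sigh-\Sigma)(\Sigma+tI)^{-1/2}}_\op\le\tfrac{1}{2}$, then $(\sigh+tI)^{-1}\preceq 2(\Sigma+tI)^{-1}$ and hence $\text{tr}(\Sigma(\sigh+tI)^{-1})\le 2\Cal{N}_\Sigma(t)$. Both events are controlled by the Bernstein-type inequality for operator-valued U-statistics proved in \citet{sriperumbudur-20}, applied to the centred U-statistic $\sigh-\Sigma$; the requirement $t\ge\tfrac{140\kappa}{n}\log\tfrac{16\kappa n}{\delta}$ is exactly what that inequality demands. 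The recentering term $\norm{P^\ell(\sigh)\mu}^2\le\norm{\widehat{m}_\Pb-m_\Pb}_\Hk^2$ is then bounded by a standard scalar Bernstein inequality in $\Hk$, yielding the $\tfrac{32\kappa\log(2/\delta)}{n}$ contribution.

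Part (iii) follows the same blueprint with $P_{nys}^\ell(\sigh)$ replacing $P^\ell(\sigh)$, but now $P_{nys}^\ell(\sigh)$ is the top-$\ell$ eigenprojector of $\bar{P}_m\sigh\bar{P}_m$ rather than of $\sigh$ itself. Exploiting the commutation of $P_{nys}^\ell(\sigh)$ with $\bar{P}_m\sigh\bar{P}_m+tI$ (which has operator norm $\lambdatild_{\ell+1}+t$ on the range of $I-P_{nys}^\ell(\sigh)$) together with the triangle inequality, the operator-norm factor now obeys
\begin{equation*}
\norm{(\sigh+tI)^{1/2}(I-P_{nys}^\ell(\sigh))}_{\op}^2\le\lambdatild_{\ell+1}+t+\norm{\sigh-\bar{P}_m\sigh\bar{P}_m}_{\op}.
\end{equation*}
The Nystr\"om residual is handled via the standard leverage-score concentration $\norm{(I-\bar{P}_m)(C+tI)^{1/2}}_\op^2\lesssim t$, which holds with high probability whenever $m\gtrsim\Cal{N}_{C,\infty}(t)\log\tfrac{1}{\delta}$, explaining the subsample lower bound in condition 2 (the analogous result in the supervised setting is \citealt{Rudi-15}). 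Combining this with a second invocation of the operator Bernstein inequality to compare $\lambdatild_{\ell+1}$ with $\lambda_{\ell+1}+t$ produces the $(\lambda_{\ell+1}+9t)$ factor and the inflated constants. I expect the main obstacle to be bookkeeping: three high-probability events---operator Bernstein for $\sigh-\Sigma$, Nystr\"om leverage-score concentration for $I-\bar{P}_m$, and scalar Bernstein for $m_\Pb-\widehat{m}_\Pb$---must be simultaneously controlled with their absolute constants tracked through each step so that the explicit numerical factors $6$, $9$, $11$, $32$ and the sub-probability budget yielding confidence $1-11\delta$ all emerge correctly.
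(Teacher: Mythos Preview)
Your handling of (i), the lower bounds, and (ii) is essentially the paper's argument (the paper gets $\lambdah_{\ell+1}+t\le\tfrac{3}{2}(\lambda_{\ell+1}+t)$ directly from the Loewner ordering induced by the normalized Bernstein event rather than via Weyl, but your route also works under the stated lower bound on $t$). The gap is in part (iii).

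Your displayed inequality
\[
\norm{(\sigh+tI)^{1/2}(I-P_{nys}^\ell(\sigh))}_{\OPH}^2\le\lambdatild_{\ell+1}+t+\norm{\sigh-\bar{P}_m\sigh\bar{P}_m}_{\OPH}
\]
is valid, but the claim that the residual $\norm{\sigh-\bar{P}_m\sigh\bar{P}_m}_{\OPH}$ is $O(t)$ via leverage-score concentration is not. Writing
\[
\sigh-\bar{P}_m\sigh\bar{P}_m=(I-\bar{P}_m)\sigh(I-\bar{P}_m)+\bar{P}_m\sigh(I-\bar{P}_m)+(I-\bar{P}_m)\sigh\bar{P}_m,
\]
the cross terms obey only $\norm{\bar{P}_m\sigh(I-\bar{P}_m)}_{\OPH}\le\norm{\sigh}_{\OPH}^{1/2}\norm{\sigh^{1/2}(I-\bar{P}_m)}_{\OPH}\lesssim\sqrt{\kappa t}$, which for small $t$ is much larger than $t$ and would destroy the stated factor $(\lambda_{\ell+1}+9t)$.

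The paper circumvents this by splitting at the level of the square root: it writes $I-P_{nys}^\ell(\sigh)=(I-\bar{P}_m)+(\bar{P}_m-P_{nys}^\ell(\sigh))$ and applies $\norm{a+b}^2\le 2\norm{a}^2+2\norm{b}^2$ to $(I-P_{nys}^\ell(\sigh))\sigh_t^{1/2}$. The first piece $\norm{(I-\bar{P}_m)\sigh_t^{1/2}}_{\OPH}^2$ is controlled by a U-statistic analogue of \citet[Lemma~6]{Rudi-15}. Because $\bar{P}_m-P_{nys}^\ell(\sigh)=(I-P_{nys}^\ell(\sigh))\bar{P}_m$, the second piece equals $\norm{(I-P_{nys}^\ell(\sigh))\bar{P}_m\sigh_t\bar{P}_m(I-P_{nys}^\ell(\sigh))}_{\OPH}\le\lambdatild_{\ell+1}+t$; the flanking $\bar{P}_m$'s are precisely what kill the cross terms that break your argument. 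Finally, to compare $\lambdatild_{\ell+1}$ with $\lambdah_{\ell+1}$ the paper passes to the Gram-matrix side via Hoffman--Wielandt and bounds $\frac{1}{n(n-1)}\norm{(\tilde{\mathbf{K}}-\mathbf{K})\mathbf{H}_n}_{\OPRn}$ through the \emph{uncentered} projector $P_m$ onto $\text{span}\{k(\cdot,X_{r_j})\}$ and the operator $C_n$. This detour is why the uncentered quantities $C$ and $\Cal{N}_{C,\infty}(t)$ appear in conditions~1--2; your sketch, working only with $\bar{P}_m$ and $\sigh$, does not explain their presence.
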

\begin{remark} Since $\Sigma$ is trace-class and $\lambda_\ell\rightarrow 0$ as $\ell\rightarrow \infty$, it follows that $R^\ell(\Sigma)\rightarrow 0$ as $\ell\rightarrow \infty$. The rate of this convergence may be analyzed after making assumptions on the decay rate of the $(\lambda_i)_i$, which will be presented in the upcoming corollaries. The behavior of the empirical variations depends significantly on $t$ and $\Cal{N}_{\Sigma}(t)$. $\Cal{N}_\Sigma(t)$ is referred to as the effective dimension or degrees of freedom \citep{Caponnetto-07}, which measures the capacity of the hypothesis space $\Hk$. Upon making assumptions regarding the decay rate of $(\lambda_i)_i$, the size of $\Cal{N}_\Sigma(t)$ can be quantified and convergence rates for $R^\ell(\sigh)$ and $R_{nys}^\ell(\sigh)$ can be obtained. The upper bounds for $R^\ell(\sigh)$ and $R_{nys}^\ell(\sigh)$ are equivalent up to constants; however, the conditions imposed on $m$ and $t$ in (\emph{iii}) will dictate whether this behavior of $R_{nys}^\ell(\sigh)$ may be achieved with a reduced computational complexity ($m<n$). We also would like to highlight that the results presented in Theorem~\ref{thm:nys main theorem}, which are obtained for the U-statistic estimator $\sigh$ of the centered covariance operator, $\Sigma$, matches up to constants, the results in Theorem 2 of \citet{Sterge-20}, which were derived for the uncentered covariance operator, $C$. 
The following corollary derives convergence rates from the bounds in Theorem \ref{thm:nys main theorem} under polynomial decay assumptions on the eigenvalues of $\Sigma$.
\end{remark}
\begin{corollary}[Polynomial decay of eigenvalues]\label{poly decay corollary}
Suppose $\underbar{A}i^{-\alpha}\le\lambda_i\le\bar{A}i^{-\alpha}$ for some $\alpha>1$ and $0<\underbar{A}<\bar{A}<\infty$. Let $\ell=n^{\frac{\theta}{\alpha}}$, where $0<\theta\le \alpha$. Then the following hold:
\begin{itemize}
    \item[(i)] $$n^{-\theta(1-\frac{1}{\alpha})}\lesssim R^\ell(\Sigma)\lesssim n^{-\theta(1-\frac{1}{\alpha})}.$$
    \noindent There exists an $N\in\bb{N}$ such that for all $n>N$, the following hold:
    \item[(ii)] 
    \begin{equation*}
        n^{-\theta(1-\frac{1}{\alpha})}\lesssim R^\ell(\sigh)\lesssim_{\mathbb{P}^n}
        \begin{cases}
            n^{-\theta(1-\frac{1}{\alpha})},\quad\quad{}\theta<1{} \\ 
            \left(\frac{\log n}{n}\right)^{1-\frac{1}{\alpha}},\quad {}\theta\ge1{}
        \end{cases};
    \end{equation*}
    \item[(iii)]
    \begin{equation*}
        n^{-\theta(1-\frac{1}{\alpha})}\lesssim R_{nys}^\ell(\sigh)\lesssim_{\mathbb{P}^n}
        \begin{cases}
           n^{-\theta(1-\frac{1}{\alpha})},\quad\quad{}\theta<1,\,m\gtrsim n^\theta\log n{} \\ 
            \left(\frac{\log n}{n}\right)^{1-\frac{1}{\alpha}},\quad{}\theta\ge1\,,m\gtrsim\frac{n}{\log n}\log\frac{n}{\log n}{}
        \end{cases}.
    \end{equation*}
\end{itemize}
\end{corollary}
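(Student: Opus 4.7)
The plan is to reduce each part of the corollary to the corresponding statement of Theorem~\ref{thm:nys main theorem} and then substitute sharp estimates for the two spectral quantities $\sum_{i>\ell}\lambda_i$ and $\mathcal{N}_\Sigma(t)$ that appear there, together with a crude bound on $\mathcal{N}_{C,\infty}(t)$ for part (iii).

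For part (i), I would note that Theorem~\ref{thm:nys main theorem}(i) gives $R^\ell(\Sigma)=\sum_{i>\ell}\lambda_i$, and under $\underline{A}i^{-\alpha}\le\lambda_i\le\bar{A}i^{-\alpha}$ a standard integral comparison yields $\sum_{i>\ell}\lambda_i\asymp \ell^{1-\alpha}$. Substituting $\ell=n^{\theta/\alpha}$ gives $\ell^{1-\alpha}=n^{\theta(1-\alpha)/\alpha}=n^{-\theta(1-1/\alpha)}$, which delivers both the matching upper and lower bounds. The same two-sided estimate also yields the \emph{lower bounds} in (ii) and (iii), because Theorem~\ref{thm:nys main theorem}(ii,iii) both contain $R(\cdot)\ge\sum_{i>\ell}\lambda_i$ deterministically.

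For the upper bounds in (ii) and (iii), the key auxiliary step is to show $\mathcal{N}_\Sigma(t)=\sum_i\lambda_i/(\lambda_i+t)\asymp t^{-1/\alpha}$, again by replacing the sum with an integral and changing variables $u=t^{1/\alpha}x$. With this, the Theorem's bound behaves like $t^{-1/\alpha}(\lambda_{\ell+1}+t)+O(1/n)\asymp t^{-1/\alpha}(n^{-\theta}+t)$. I would then optimize $t$ in two regimes, respecting the lower constraint $t\gtrsim (\log n)/n$ imposed in parts (ii) and (iii):
\begin{itemize}
\item If $\theta<1$, then $n^{-\theta}\gtrsim (\log n)/n$ for $n$ large, so I choose $t\asymp n^{-\theta}$, yielding $t^{-1/\alpha}(n^{-\theta}+t)\asymp n^{\theta/\alpha}\cdot n^{-\theta}=n^{-\theta(1-1/\alpha)}$.
\item If $\theta\ge 1$, then $t=n^{-\theta}$ is not admissible; the smallest allowed choice is $t\asymp (\log n)/n$, which dominates $n^{-\theta}=\lambda_{\ell+1}$ and gives $t^{-1/\alpha}\cdot t=t^{1-1/\alpha}\asymp((\log n)/n)^{1-1/\alpha}$.
\end{itemize}
In both cases the residual $O(1/n)$ term is of smaller order, so the upper bounds in (ii) follow once $n$ is large enough to satisfy $t\le\|\Sigma\|_{\OPH}$ and $n\ge 2\log(2/\delta)$.

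For part (iii), on top of the above I would need the subsample-size condition. Using the trivial bound $\mathcal{N}_{C,\infty}(t)\le\kappa/t$ (since $(C+tI)^{-1}\preceq t^{-1}I$ and $k(x,x)\le\kappa$), the condition in Theorem~\ref{thm:nys main theorem}(iii) reduces to $m\gtrsim t^{-1}\log(1/(t\delta))$, possibly up to an additive $\log(\kappa/(t\delta))$ term which is of the same order. Plugging $t\asymp n^{-\theta}$ for $\theta<1$ gives $m\gtrsim n^{\theta}\log n$, and plugging $t\asymp (\log n)/n$ for $\theta\ge 1$ gives $m\gtrsim (n/\log n)\log(n/\log n)$, matching exactly the conditions stated in (iii); the remaining admissibility constraint $t\le\|\Sigma\|_\OPH\wedge\|C\|_\OPH$ is handled for $n$ large. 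The main obstacle to keep honest is the integral approximation for $\mathcal{N}_\Sigma(t)$ — one must split the sum at the index $\lceil t^{-1/\alpha}\rceil$ and bound the head and tail separately (head by $t^{-1/\alpha}$, tail by $\sum_{i>t^{-1/\alpha}}\lambda_i/t\asymp t^{-1/\alpha}$), and ensure the implicit constants depend only on $\underline{A},\bar{A},\alpha,\kappa$, so that the resulting $\lesssim_{\mathbb{P}^n}$ rates are genuinely uniform in $\delta$ along a sequence $\delta_n\to 0$ logarithmically. Everything else is bookkeeping.
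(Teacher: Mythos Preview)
Your proposal is correct and follows essentially the same route as the paper's proof: both reduce to Theorem~\ref{thm:nys main theorem}, use the integral comparison $\sum_{i>\ell}\lambda_i\asymp\ell^{1-\alpha}$ for (i) and the lower bounds, invoke $\mathcal{N}_\Sigma(t)\asymp t^{-1/\alpha}$ (the paper cites \citealt[Lemma A.8]{sriperumbudur-20} for this, whereas you sketch the head/tail split directly), optimize $t$ by choosing $t\asymp n^{-\theta}$ when $\theta<1$ and $t\asymp(\log n)/n$ when $\theta\ge 1$, and then for (iii) use the crude bound $\mathcal{N}_{C,\infty}(t)\le\kappa/t$ to turn the condition on $m$ into $m\gtrsim t^{-1}\log(1/t)$.
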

\begin{remark}\label{remark:post poly decay corollary}
\emph{(i)} Of course, $\alpha>1$ is required to ensure that $\Sigma$ is trace class.  Observing (\emph{ii}) and (\emph{iii}), we see that the convergence rates of $R^\ell(\sigh)$ and $R_{nys}^\ell(\sigh)$ rely heavily on the growth of $\ell$ through $\theta$. Comparing $R^\ell(\sigh)$ to $R^\ell(\Sigma)$, EKPCA will match the convergence rate of KPCA provided $\ell$ does not grow faster than $n^{1/\alpha}$. We note that $0<\theta<1$ is the only sensible regime both computationally and statistically, as $\theta\ge1$ increases the computational complexity while the rate plateaus at $\left(\log n/n\right)^{1-\frac{1}{\alpha}}$.
\par\noindent\emph{(ii)} When $\theta<1$, the convergence rate of $R_{nys}^\ell(\sigh)$ is equal to that of $R^\ell(\Sigma)$ and $R^\ell(\sigh)$, provided $m\gtrsim n^\theta\log n$, i.e., if $\ell$ grows to infinity not faster than $n^{1/\alpha}$ and the number of subsamples $m$ grows sufficiently fast, then NY-EKPCA and EKPCA enjoy the same statistical behavior. From a computational perspective, the computational complexity of EKPCA using the Lanczos method is $O(n^{2+\frac{\theta}{\alpha}})$, while the complexity of NY-EKPCA is $O(nm^2+m^2\ell)=O(nm^2)$. Thus, NY-EKPCA will offer a computational advantage with no loss in statistical performance, if  $\theta<\frac{1}{2}+\frac{\theta}{2\alpha}$, i.e., $\theta<\frac{\alpha}{2\alpha-1}$. This means NY-EKPCA has better computational complexity and same statistical rates for $\theta<\frac{\alpha}{2\alpha-1}$ while it loses the computational edge with no loss in the statistical behavior when $ \frac{\alpha}{2\alpha-1}\le\theta<1$. Note that the first few principal components are often the greatest interest in practice; thus, the case $\theta<1$ may be more relevant in application. 
\end{remark}

\subsection{Reconstruction error in $\lp$-norm}\label{sec:main results lp}
While defining the reconstruction error of NY-KPCA in $\Hk$ is the most natural construction, we additionally study the reconstruction error of NY-KPCA in $\lp$ to allow for comparison with another popular approximation based on random features. Random feature approximation \citep{Rahimi-08a} computes a random low-dimensional approximation of the kernel, which may be used in place of $k$ in learning methodologies to achieve reduced computational complexity. To elaborate, we define the \textit{feature map} $\Phi(x):=k(\cdot,x)$ and consider kernels of the form
$$k(x,y)=\int_\Theta\varphi(x,\theta)\varphi(y,\theta)d\Lambda(\theta),$$
where $\varphi(x,\cdot)\in L^2(\Theta,\Lambda)$ for all $x\in\X$ and $\Lambda$ is a probability measure (w.l.o.g.) on a measurable space $\Theta$. For a random sample $(\theta_i)_{i=1}^m\stackrel{\iid}{\sim}\Lambda$, the random feature approximation to $k$ is constructed as
\begin{equation*}
    k_m(x,y)=\frac{1}{m}\sum_{i=1}^m\varphi(x,\theta_i)\varphi(y,\theta_i)=\inner{\Phi_m(x)}{\Phi_m(y)}_2,\nonumber
\end{equation*}
where
$\Phi_m(x)=\frac{1}{\sqrt{m}}\left(\varphi(x,\theta_1),\ldots,\varphi(x,\theta_m))\right)^\top$
is the \textit{approximate feature map}. It can be shown (\citealt[Section 3.3]{sriperumbudur-20}) that $k_m$ is the reproducing kernel of an $m$-dimensional RKHS, denoted $\Hk_m$, which is isometrically isomorphic to $\R^m$. Random feature KPCA (RF-KPCA) involves solving 
$$\arg\sup\left\{\left\langle f,\Sigma_m f\right\rangle_{\mathcal{H}_m}:f\in\mathcal{H}_m,\Vert f\Vert_{\mathcal{H}_m}=1\right\}$$
where $$\Sigma_m=\int_\X k_m(\cdot,x)\ohm k_m(\cdot,x)d\Pb(x)-\left(\int_\X k_m(\cdot,x)d\Pb(x)\right)\ohm\left(\int_\X k_m(\cdot,x)d\Pb(x)\right),$$
is the \textit{approximate} covariance operator on $\mathcal{H}_m$ induced by $k_m$. Note that RF-KPCA is exactly KPCA but with $\Sigma$ and $\mathcal{H}$ being replaced by their approximate counterparts, i.e., $\Sigma_m$ and $\mathcal{H}_m$, respectively. This means, the solution to RF-KPCA is the eigenfunction that corresponds to the maximum eigenvalue of $\Sigma_m$. The empirical version of RF-KPCA, referred to as RF-EKPCA, involves solving RF-KPCA with $\Sigma$ replaced by its U-statistic estimator
$$\sigh_m=\frac{1}{2n(n-1)}\sum_{i\neq j}^n\left(k_m(\cdot,X_i)-k_m(\cdot,X_j)\right)\ohm\left(k_m(\cdot,X_i)-k_m(\cdot,X_j)\right).$$ Note that the computation of top-$\ell$ eigenfunctions, $(\phih_{m,i})_{i=1}^\ell\subset\Hk_m$, of $\sigh_m$ by RF-EKPCA has  a computational complexity $O(m^2\ell+m^2n)$, an improvement upon EKPCA provided $m<\sqrt{n\ell}$. The orthogonal projection operator $P^\ell_{rf}(\sigh_m)=\sum_{i=1}^\ell\phih_{m,i}\ohm\phih_{m,i}$ may then be used to compute a low-dimensional representation of $X_i\in\X$. We emphasize that in Nystr\"om approximation, $m$ is the number of subsampled indices, while in random features, $m$ is the number of random features sampled.

Unlike NY-EKPCA, where the eigenfunctions reside in $\Cal{H}$, the eigenfunctions $(\phih_{m,i})_i$ lie in $\Hk_m$. Therefore, the reconstruction error of RF-EKPCA in $\Hk$-norm is ill-defined. To remedy this issue, \citet{sriperumbudur-20} consider two different versions of reconstruction error, named \textit{Reconstruct and Embed} (R-E) and \textit{Embed and Reconstruct} (E-R), in which elements in $\Hk$ and $\Hk_m$ are mapped to a common space, $\lp$ through the inclusion and approximation operators,
$$\id:\Hk\rightarrow\lp,\,f\mapsto f-f_\bb{P},$$ and $$\mathfrak{A}:\Cal{H}_m\rightarrow \lp,\quad f=\sum^m_{i=1}\beta_i\varphi_i\mapsto \sum^m_{i=1}\beta_i(\vp_i-\vp_{i,\bb{P}})=f-f_\bb{P}$$
where $\vp_{i,\bb{P}}:=\intx \vp_i(x)\,d\bb{P}(x)$, $\varphi_i:=\varphi(\cdot,\theta_i)$ and $f_\bb{P}=\int_\X f(x)d\Pb(x)$. As the names suggest, in R-E, the functions are first reconstructed in $\Hk$ based on principal components and then embedded into $\lp$, while in E-R, the functions are first embedded into $\lp$ and then reconstructed in $\lp$ based on the embedded principal components. 
Based on the observation that $(\phi_i)_i$ and $\left(\frac{\id\phi_i}{\sqrt{\lambda}_i}\right)_i$ form orthonormal systems in $\Hk$ and $\lp$ respectively, R-E and E-R correspond to the following reconstruction errors for KPCA in $\lp$: 
\begin{equation*}
    T^\ell(\Sigma)=\E\norm{\id\kbar-\id\sum_{i=1}^\ell\inner{k(\cdot,X)-m_\Pb}{\phi_i}_\Hk\phi_i}_\lp^2\nonumber
\end{equation*}
and
\begin{equation*}
    S^\ell(\Sigma)=\E\norm{\id\kbar-\sum_{i=1}^\ell\inner{\id(k(\cdot,X)-m_\Pb)}{\frac{\id\phi_i}{\sqrt{\lambda}_i}}_\lp\frac{\id\phi_i}{\sqrt{\lambda}_i}}_\lp^2\nonumber
\end{equation*}
Theorems 2 and 5 of \cite{sriperumbudur-20} show these definitions of reconstruction error to be equivalent for KPCA, i.e.,
$T^\ell(\Sigma)=S^\ell(\Sigma)=\sum_{i>\ell}\lambda_i^2,$ while it is not the case for EKPCA and RF-EKPCA. We refer the reader to \cite{sriperumbudur-20} for more details about the behavior of these reconstruction errors for EKPCA and RF-EKPCA.

In this paper, we analyze NY-EKPCA in $L^2(\mathbb{P})$ norm w.r.t.~the reconstruction error of R-E as the E-R reconstruction error is less user-friendly with the principal components being non-computable because of their dependence on $\mathbb{P}$ through $\id$.
Therefore, the reconstruction error in $L^2(\mathbb{P})$ for EKPCA, RF-EKPCA and NY-EKPCA are given by
\begin{eqnarray*}
     T^\ell(\sigh)&{}={}&\E\norm{\id\kbar-\id P^\ell(\sigh)(k(\cdot,X)-\widehat{m}_\Pb)}_\lp^2,\nonumber
    \\
    T^\ell_{rf}(\sigh_m)&{}={}&\E\norm{\id\kbar-\mathfrak{A} P^\ell_{rf}(\sigh_m)(k_m(\cdot,X)-\widehat{m}_{\Pb,m})}_\lp^2,\nonumber
    \\
    T_{nys}^\ell(\sigh)&{}={}&\E\norm{\id\kbar-\id P^\ell_{nys}(\sigh)(k(\cdot,X)-\widehat{m}_\Pb)}_\lp^2,\nonumber
\end{eqnarray*}
respectively, where $\widehat{m}_{\Pb,m}=\frac{1}{n}\sum^n_{i=1}k_m(\cdot,X_i)$. Of course, $\norm{\cdot}_\lp$ is weaker than $\norm{\cdot}_\Hk$, so naturally we can expect better error behavior of $T_{nys}^\ell(\sigh)$ when compared with $R_{nys}^\ell(\sigh)$. However, the interesting comparison is between $T_{nys}^\ell(\sigh)$ and $T_{rf}^\ell(\sigh)$.  To facilitate this comparison, we first present the following result, which gives finite sample bounds on $T^\ell(\Sigma),\,T^\ell(\sigh)$ and $T_{rf}^\ell(\sigh_m)$.
\begin{theorem}[\citealt{sriperumbudur-20}, Theorem 2]\label{rf-kpca main thm}
Suppose Assumptions \ref{first kern assum}-\ref{cent nystrom eigen assum} hold.  For any $t>0$ define $\Cal{N}_\Sigma(t)=\emph{tr}(\Sigma(\Sigma+tI)^{-1})$. Then the following hold:
\begin{itemize}
    \item[(i)]
    \begin{equation*}
        T^\ell(\Sigma)=\sum_{i>\ell}\lambda_i^2.
    \end{equation*}
    \item[(ii)] For any $\delta>0$ with $n\ge2\log\frac{2}{\delta}$ and $\frac{140\kappa}{n}\log\frac{16\kappa n}{\delta}\le t\le\norm{\Sigma}_\OPH$,
    \begin{equation*}
        \Pb^n\left\{\sum_{i\ge\ell}\lambda_i^2\le T^\ell(\sigh)\le9\Cal{N}_\Sigma(t)(\lambda_{\ell+1}+t)^2+\frac{64\kappa^2\log\frac{2}{\delta}}{n}\right\}\ge1-3\delta.
    \end{equation*}
    \item[(iii)] Suppose $m$ random features are sampled $\iid$ from a probability measure $\Lambda$. For any $\delta>0$ with $n\ge 2\log\frac{2}{\delta}$, $m\ge \left(2\vee \frac{1024\kappa^2}{\sum_{i>\ell}\lambda^2_i}\right)\log\frac{2}{\delta}$ and $\frac{140\kappa}{n}\log\frac{16\kappa n}{\delta}\vee\frac{86\kappa}{m}\log\frac{16\kappa m}{\delta}\le t\le\frac{\norm{\Sigma}_{\OPH}}{3}$, with probability at least $1-12\delta$ over the joint measure $\Pb^n\times\Lambda^m$:
    \begin{equation*}
    \frac{1}{4}\sum_{i>\ell}\lambda^2_{i}\le T_{rf}^\ell(\sigh_m)\le162\Cal{A}_1(t) (\lambda_{\ell+1}+t)^2+\frac{640\kappa^2\log\frac{2}{\delta}}{3n}+\frac{256\kappa^2\log\frac{2}{\delta}}{m},\nonumber
\end{equation*}
where $\Cal{A}_1(t):=\Cal{N}_\Sigma(t)+\frac{16\kappa\log\frac{2}{\delta}}{tm}+\sqrt{\frac{8\kappa\Cal{N}_\Sigma(t)\log\frac{2}{\delta}}{tm}}$.
\end{itemize}
\end{theorem}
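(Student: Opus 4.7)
My plan is to handle the three parts in order, since each builds on the preceding one. For (i), I would work directly from the spectral decomposition $\Sigma=\sum_i\lambda_i\phi_i\oh\phi_i$ and the orthogonal expansion $k(\cdot,X)-m_\Pb=\sum_i\inner{k(\cdot,X)-m_\Pb}{\phi_i}_\Hk\phi_i$ in $\Hk$. Substituting into the definition of $T^\ell(\Sigma)$, the residual of the top-$\ell$ projection equals $\sum_{i>\ell}\inner{k(\cdot,X)-m_\Pb}{\phi_i}_\Hk\,\id\phi_i$. Taking the squared $\lp$-norm, expanding the double sum, and using
\[
\E\bigl[\inner{k(\cdot,X)-m_\Pb}{\phi_i}_\Hk\inner{k(\cdot,X)-m_\Pb}{\phi_j}_\Hk\bigr]=\inner{\phi_i}{\Sigma\phi_j}_\Hk=\lambda_i\delta_{ij},
\]
together with $\norm{\id\phi_i}_\lp^2=\Var[\phi_i(X)]=\lambda_i$, collapses the expression to $\sum_{i>\ell}\lambda_i^2$.

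For (ii), the lower bound is immediate: $\id P^\ell(\sigh)$ has rank at most $\ell$ on $\lp$, and since the optimal rank-$\ell$ reconstruction error for $\id\bar{k}(\cdot,X)$ equals $\sum_{i>\ell}\lambda_i^2$ by part (i), no particular random choice of $P^\ell(\sigh)$ nor of the centering $\widehat{m}_\Pb$ can do better. For the upper bound, I would use a regularized perturbation argument at scale $t$. After peeling off the mean-centering error $\E\norm{\id(m_\Pb-\widehat{m}_\Pb)}_\lp^2$, which is bounded by $\frac{32\kappa\log(2/\delta)}{n}$ via a standard Bernstein inequality in $\Hk$, I would insert $(\sigh+tI)^{\pm 1/2}$ sandwiches and further sandwich by $(\Sigma+tI)^{\pm 1/2}$ to reach a bound of the schematic form
\[
T^\ell(\sigh)\lesssim\norm{\id(I-P^\ell(\sigh))(\sigh+tI)^{1/2}}_{\HS}^2\cdot\norm{(\Sigma+tI)^{-1/2}(\sigh+tI)^{1/2}}_{\op}^2.
\]
The first Hilbert-Schmidt factor is controlled by the top-$\ell$ spectral choice for $P^\ell(\sigh)$, producing $\Cal{N}_\Sigma(t)(\lambda_{\ell+1}+t)^2$ via $\mathrm{tr}((\Sigma+tI)^{-1}\Sigma)=\Cal{N}_\Sigma(t)$ and the fact that $\lambda_{\ell+1}(\sigh)\le\lambda_{\ell+1}(\Sigma)+\norm{\sigh-\Sigma}_\op$. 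The operator-norm factor is controlled by a Bernstein-type inequality for operator-valued U-statistics from \cite{sriperumbudur-20}; the hypothesis $t\ge\tfrac{140\kappa}{n}\log\tfrac{16\kappa n}{\delta}$ is exactly what forces the relative perturbation $\norm{(\Sigma+tI)^{-1/2}(\sigh-\Sigma)(\Sigma+tI)^{-1/2}}_\op\le\tfrac12$ with probability at least $1-2\delta$, from which the second factor is bounded by $2$.

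For (iii), I would reuse the (ii) skeleton and insert an additional random-feature layer. Here $P_{rf}^\ell(\sigh_m)$ lives in $\Hk_m$ and must be transported to $\lp$ via $\tS$; since $\tS$ embeds the centered approximate-feature image isometrically into $\lp$, the same sandwich idea yields a decomposition into a Hilbert-Schmidt factor involving $\Sigma_m+tI$ and two operator-norm factors $\norm{(\Sigma+tI)^{-1/2}(\Sigma_m+tI)^{1/2}}_\op$ and $\norm{(\Sigma_m+tI)^{-1/2}(\sigh_m+tI)^{1/2}}_\op$, plus mean-centering error from $\widehat{m}_{\Pb,m}$. The Hilbert-Schmidt factor still yields $\Cal{N}_\Sigma(t)(\lambda_{\ell+1}+t)^2$ up to constants after comparing $\Sigma_m$ to $\Sigma$; the middle operator-norm factor is handled by a scalar Bernstein bound on the feature terms $\varphi(\cdot,\theta_i)$ and produces exactly the extra $\frac{16\kappa\log(2/\delta)}{tm}$ and $\sqrt{\tfrac{8\kappa\Cal{N}_\Sigma(t)\log(2/\delta)}{tm}}$ summands absorbed into $\Cal{A}_1(t)$; and the last factor is handled as in (ii), now for $\sigh_m$, which is why the condition on $t$ also contains $\tfrac{86\kappa}{m}\log\tfrac{16\kappa m}{\delta}$. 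The lower bound $\tfrac14\sum_{i>\ell}\lambda_i^2$ again uses the rank-$\ell$ variational characterization from part (i), together with a crude $\Sigma_m$-vs-$\Sigma$ comparison that is negligible once $m\gtrsim\tfrac{\kappa^2}{\sum_{i>\ell}\lambda_i^2}\log\tfrac{2}{\delta}$. The main obstacle throughout is the bookkeeping needed to align a single regularization scale $t$ across the three concentration arguments (sampling in $n$, sampling in $m$, and the U-statistic for $\sigh_m$) so that they hold simultaneously on a set of probability at least $1-12\delta$; once this is set up, each concentration step is standard.
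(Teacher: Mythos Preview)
The paper does not actually prove this theorem: it is quoted verbatim from \cite{sriperumbudur-20} (Theorem~2 there) and no proof is given anywhere in Section~6. So there is no ``paper's own proof'' to compare against; the closest proxies are the proofs of Theorems~\ref{thm:nys main theorem} and~\ref{thm:nys main theorem l2}, which treat the Nystr\"om analogue by the same regularized-sandwich machinery.

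Your outline is essentially the right one and matches how the paper handles the Nystr\"om case. A couple of places deserve tightening. In part~(ii), your displayed schematic bound has the factors slightly misplaced: after using $\id^*\id=\Sigma$ and Lemma~\ref{lem:recons rewrite 1}, the object to control is $\norm{\Sigma^{1/2}(I-P^\ell(\sigh))\Sigma^{1/2}}_\HSH^2$, and the clean factorization (using idempotency of $I-P^\ell(\sigh)$, as in \eqref{eq:nys lp 1.3}) is
\[
\norm{\Sigma^{1/2}(I-P)\Sigma^{1/2}}_\HSH^2\le \Cal{N}_\Sigma(t)\,\norm{\Sigma_t^{1/2}\sigh_t^{-1/2}}_\OPH^4\,\norm{(I-P^\ell(\sigh))\sigh_t^{1/2}}_\OPH^4,
\]
which yields the constant $9$ after invoking Lemma~\ref{lem:op bounds}(ii),(iv). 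Your version with a single $\HS$ factor and a single $\op$ factor would not produce the squared dependence on $(\lambda_{\ell+1}+t)$. For the lower bound in~(ii), note that the cross term vanishes (since $\E[\kbar]=0$) rather than by a generic rank argument, so $T^\ell(\sigh)\ge\norm{\Sigma^{1/2}(I-P^\ell(\sigh))\Sigma^{1/2}}_\HSH^2$ exactly, and then the variational characterization finishes. In part~(iii) your plan is correct in spirit; the ``extra random-feature layer'' is handled in the cited reference precisely by the $\Sigma_m$-vs-$\Sigma$ comparison you describe, and the conditions on $t$ and $m$ are exactly what synchronize the three concentration events. None of this is a genuine gap---just precision in the factorization.
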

We must note that unlike \emph{(i)-(ii)}, the probability statement in \emph{(iii)} is with respect to a joint measure, as it must consider the randomness introduced by the random features. In light of the previously presented result, we now present an analogous result for NY-EKPCA in $\lp$-norm.
\begin{theorem}\label{thm:nys main theorem l2}
Under the same assumptions as in Theorem \ref{thm:nys main theorem}, 
$$\Pb^n\left\{\sum_{i>\ell}\lambda_i^2\le T_{nys}^\ell(\sigh)\le36\Cal{N}_\Sigma(t)(\lambda_{\ell+1}+9t)^2+\frac{32\kappa^2\log\frac{2}{\delta}}{n}\right\}\ge1-11\delta,$$
provided the following conditions are satisfied:
    \begin{itemize}
        \item[1.] $\left(\frac{140\kappa}{n}\log\frac{16\kappa n}{\delta}\vee\frac{9\kappa}{n}\log\frac{n}{\delta}\right)\le t\le \norm{\Sigma}_\OPH\wedge\norm{C}_\OPH$,
        \item[2.] $m\ge\left(67\vee5\Cal{N}_{C,\infty}(t)\right)\log\frac{4\kappa}{t\delta}\vee\frac{140\kappa}{t}\log\frac{8}{t\delta}$,
        \item[3.] $n\ge2\log\frac{2}{\delta}$.
    \end{itemize}
\end{theorem}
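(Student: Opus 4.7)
The plan is to reduce the $\lp$-norm analysis to an $\Hk$-norm analysis via the operator identity $\id^*\id = \Sigma$, which yields the isometry $\norm{\id f}_{\lp}^2 = \inner{f}{\Sigma f}_\Hk = \norm{\Sigma^{1/2}f}_\Hk^2$ for every $f\in\Hk$. Each application of $\id$ thus effectively inserts a $\Sigma^{1/2}$ factor, which is exactly what upgrades the $(\lambda_{\ell+1}+9t)$ and $\kappa$ appearing in Theorem~\ref{thm:nys main theorem}(iii) to the $(\lambda_{\ell+1}+9t)^2$ and $\kappa^2$ of the present bound.

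Concretely, I would first decompose
\[
\kbar - P^\ell_{nys}(\sigh)(\kbar-\widehat{m}_\Pb) = \bigl[(I - P^\ell_{nys}(\sigh))(\kbar-m_\Pb) + m_\Pb\bigr] + P^\ell_{nys}(\sigh)(m_\Pb - \widehat{m}_\Pb),
\]
apply $\id$, take $\lp$-norm squared, apply $(a+b)^2 \le 2a^2+2b^2$, and take expectation over $X$. Using the isometry and $\E[(\kbar-m_\Pb)\oh(\kbar-m_\Pb)]=\Sigma$, the first summand reduces, after the usual orthogonality cancellations based on $\Sigma$ commuting with $P^\ell(\Sigma)$-type projectors, to the principal quantity $\norm{\Sigma^{1/2}(I-P^\ell_{nys}(\sigh))\Sigma^{1/2}}_{\HSh}^2$, which supplies both the lower bound $\sum_{i>\ell}\lambda_i^2$ (via the Eckart--Young--Mirsky theorem applied to sandwich-type rank-$\ell$ compressions of $\Sigma$) and the dominant part of the upper bound.

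For the upper bound on this Hilbert--Schmidt quantity, I would mirror the machinery driving Theorem~\ref{thm:nys main theorem}(iii): (i) the Bernstein-type inequality for operator-valued U-statistics from \citet{sriperumbudur-20} to control $\norm{(\Sigma+tI)^{-1/2}(\sigh-\Sigma)(\Sigma+tI)^{-1/2}}_{\OPH}$; (ii) a Nystr\"om subspace-approximation bound controlling $\norm{(I-\bar{P}_m)(C+tI)^{1/2}}_{\OPH}$ through the effective dimension $\Cal{N}_{C,\infty}(t)$; and (iii) a Courant--Fischer/min--max transfer from the top-$\ell$ eigenspace of $\bar{P}_m\sigh\bar{P}_m$ to that of $\Sigma$. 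The new ingredient beyond the $\Hk$-norm proof is absorbing the two external $\Sigma^{1/2}$ factors via comparison resolvents of the form $(\Sigma+tI)^{1/2}(\sigh+tI)^{-1/2}$, whose $\OPH$-norm is bounded on the Bernstein event; each $\Sigma^{1/2}$ then contributes an extra factor of $\sqrt{\lambda_{\ell+1}+9t}$, producing the squared threshold eigenvalue in the final bound.

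The mean-estimation summand is handled directly using $\norm{P^\ell_{nys}(\sigh)}_{\OPH}\le 1$ and $\norm{\Sigma}_{\OPH}\le\kappa$:
\[
\norm{\id P^\ell_{nys}(\sigh)(m_\Pb-\widehat{m}_\Pb)}_\lp^2 \le \kappa\,\norm{m_\Pb-\widehat{m}_\Pb}_\Hk^2,
\]
and a standard Bernstein bound in $\Hk$ gives $\norm{\widehat{m}_\Pb-m_\Pb}_\Hk^2 \lesssim \kappa\log(2/\delta)/n$ with probability at least $1-\delta$, producing the $\frac{32\kappa^2\log(2/\delta)}{n}$ term. The main obstacle is the Hilbert--Schmidt bound of the previous paragraph: in the $\Hk$-norm proof of Theorem~\ref{thm:nys main theorem}(iii), $(I-P^\ell_{nys}(\sigh))^2=I-P^\ell_{nys}(\sigh)$ collapses the projection under a single trace, so it suffices to bound $\text{tr}((I-P^\ell_{nys}(\sigh))\Sigma)$, whereas here the HS norm squared is genuinely sandwiched by $\Sigma^{1/2}$ on both sides. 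Recovering the sharp $(\lambda_{\ell+1}+9t)^2$ rate, as opposed to the suboptimal $\kappa(\lambda_{\ell+1}+9t)$ one gets from naively pulling out $\norm{\Sigma^{1/2}}^2$, demands interleaving the comparison resolvents carefully so that the Bernstein and Nystr\"om high-probability events apply simultaneously without losing the second power of the threshold eigenvalue.
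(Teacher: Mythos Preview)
Your overall strategy coincides with the paper's: reduce the $\lp$ error to $\norm{\Sigma^{1/2}(I-P^\ell_{nys}(\sigh))\Sigma^{1/2}}_\HSH^2$ via $\id^*\id=\Sigma$, then reuse verbatim the bound on $\norm{(I-P_{nys}^\ell(\sigh))(\sigh+tI)^{1/2}}_\OPH^2$ already established in the proof of Theorem~\ref{thm:nys main theorem}(iii) (squaring it to pick up the extra power of $\lambda_{\ell+1}+9t$), absorbing the two outer $\Sigma^{1/2}$ factors through the comparison resolvent $(\sigh+tI)^{-1/2}(\Sigma+tI)^{1/2}$; the mean term is handled exactly as you describe, and the lower bound comes from the infimum characterization over rank-$\ell$ orthogonal projections.

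Two small points where the paper is cleaner. First, your displayed decomposition is garbled: the floating $+\,m_\Pb$ is spurious. With $\kbar=k(\cdot,X)-m_\Pb$ one has simply
\[
\kbar-P^\ell_{nys}(\sigh)\bigl(k(\cdot,X)-\widehat{m}_\Pb\bigr)=(I-P^\ell_{nys}(\sigh))\kbar-P^\ell_{nys}(\sigh)(m_\Pb-\widehat{m}_\Pb).
\]
Second, the paper does \emph{not} use $(a+b)^2\le 2a^2+2b^2$: after expanding the squared $\lp$-norm into $\circled{E}+\circled{F}+\text{cross-term}$, the cross-term vanishes exactly because $\E_X[\kbar]=0$ while the projector and $m_\Pb-\widehat{m}_\Pb$ do not depend on $X$. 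This is what delivers the stated constants $36$ and $32$ rather than twice them. Finally, no commutation of $\Sigma$ with any projector is involved; the identity $\E\norm{\id(I-P)\kbar}_\lp^2=\norm{\Sigma^{1/2}(I-P)\Sigma^{1/2}}_\HSH^2$ holds for \emph{any} orthogonal projector $P$ directly from $\id^*\id=\Sigma$ and $\E[\kbar\oh\kbar]=\Sigma$.
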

\begin{remark}
\emph{(i)} Comparing Theorem \ref{thm:nys main theorem l2} with Theorem~\ref{rf-kpca main thm}\emph{(ii)}, we note that the bounds for EKPCA and NY-EKPCA are identical up to constants, similar to the case in Theorem \ref{thm:nys main theorem}. Compared to their counterparts in  Theorem \ref{thm:nys main theorem}, $T^\ell(\sigh)$ and $T_{nys}^\ell(\sigh)$ have similar dependence on the effective dimension, but a squared dependence on $\lambda_{\ell+1}$ and $t$---in contrast to a linear dependence in Theorem \ref{thm:nys main theorem}. This is a byproduct of working with the $\lp$-norm, which is weaker than the RKHS norm, and therefore will result in faster convergence rates, as will be evident in Corollary \ref{poly decay corollary lp}. Additionally, $T^\ell(\Sigma)$ will decay as $\ell\rightarrow\infty$ more rapidly than $R^\ell(\Sigma)$, as it depends on the sum of squared eigenvalues. The error in estimating the mean element is not improved in the move to $\lp$-norm; it is bounded as $n^{-1}$ in all of the empirical varieties regardless of norm. 
\vspace{1mm}\\
\emph{(ii)} An immediate difference between NY-EKPCA and RF-EKPCA is the dependence on $m$. This difference can be seen in both the upper bounds of $T_{rf}^\ell(\sigh_m)$ and $T_{nys}^\ell(\sigh)$, as well as the size requirements on $m$. This is primarily due to the approximation error incurred by RF-EKPCA, which approximates $\Hk$ with an $m$-dimensional RKHS. Of course, this dependence on $m$ is crucial in analyzing the computational vs.~statistical trade-off between the two methods.
\end{remark}
\begin{corollary}\label{poly decay corollary lp}
Under the same assumptions as in Corollary \ref{poly decay corollary}, the following hold:
\begin{itemize}
    \item[(i)] $$n^{-2\theta(1-\frac{1}{2\alpha})}\lesssim T^\ell(\Sigma)\lesssim n^{-2\theta(1-\frac{1}{2\alpha})}.$$
    There exists an $N\in\bb{N}$ such that for all $n>N$, the following hold:
    \item[(ii)]
    \begin{equation*}
        n^{-2\theta(1-\frac{1}{2\alpha})}\lesssim T^\ell(\sigh)\lesssim_{\Pb^n}
        \begin{cases}
        n^{-2\theta(1-\frac{1}{2\alpha})},\quad{}\theta<\frac{\alpha}{2\alpha-1}{} \\
        \frac{1}{n},\,\,\,\,\,\quad\quad\quad\quad{}\theta\ge\frac{\alpha}{2\alpha-1}{}
        \end{cases};
    \end{equation*}
    \item[(iii)]
    \begin{equation*}
        n^{-2\theta(1-\frac{1}{2\alpha})}\lesssim T_{nys}^\ell(\sigh)\lesssim_{\Pb^n}
        \begin{cases}
            n^{-2\theta(1-\frac{1}{2\alpha})},\quad{}\theta<\frac{\alpha}{2\alpha-1},\,m\gtrsim n^\theta\log n{} \\ \\
            \frac{1}{n},\quad\quad\quad\quad\,\,\,\,\,{}\theta\ge\frac{\alpha}{2\alpha-1}\,,m\gtrsim n^\frac{\alpha}{2\alpha-1}\log n{}
        \end{cases};
    \end{equation*}
    \item[(iv)] 
    \begin{equation*}
    n^{-2\theta(1-\frac{1}{2\alpha})}\mathbf{1}_{\{\gamma\ge\theta(2-\frac{1}{\alpha})\}}\lesssim T_{rf}^\ell({\sigh_m})\lesssim_{\Pb^n\times\Lambda^m}
        \begin{cases}
        n^{-2\theta(1-\frac{1}{2\alpha})},\quad{}\gamma\ge\theta(2-\frac{1}{\alpha}),\,\theta<\frac{\alpha}{2\alpha-1}{} \\ \\
        n^{-\gamma},\quad\quad\quad\quad{}\gamma<1\wedge\theta(2-\frac{1}{\alpha}){}
        \end{cases}
    \end{equation*}
    where $m=n^\gamma$ for $0<\gamma\le1$.
\end{itemize}
\end{corollary}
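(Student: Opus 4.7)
The plan is to apply the four source bounds---Theorem~\ref{rf-kpca main thm}(i)--(iii) and Theorem~\ref{thm:nys main theorem l2}---with a suitably tuned regularization parameter $t$, after first converting the polynomial spectral decay into explicit estimates for the three quantities $\lambda_{\ell+1}$, $\sum_{i>\ell}\lambda_i^2$, and $\mathcal{N}_\Sigma(t)$.

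As a first step, from $\underbar{A}\,i^{-\alpha}\le\lambda_i\le\bar{A}\,i^{-\alpha}$ and integral comparison, $\lambda_{\ell+1}$ is of order $\ell^{-\alpha}$ and $\sum_{i>\ell}\lambda_i^2$ is of order $\ell^{1-2\alpha}$; splitting the sum $\sum_i\lambda_i/(\lambda_i+t)$ at the index $i\sim t^{-1/\alpha}$ shows $\mathcal{N}_\Sigma(t)$ is of order $t^{-1/\alpha}$. With $\ell=n^{\theta/\alpha}$ these immediately yield part (i) via the identity $T^\ell(\Sigma)=\sum_{i>\ell}\lambda_i^2$ of Theorem~\ref{rf-kpca main thm}(i). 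For parts (ii) and (iii) the upper bounds share the common shape $\mathcal{N}_\Sigma(t)(\lambda_{\ell+1}+t)^2+\kappa^2\log(2/\delta)/n$ (up to constants and the factor $9$ in the Nystr\"om case), so I would balance the two terms by choosing either $t$ of order $\lambda_{\ell+1}$, i.e., $n^{-\theta}$, when $\theta<\alpha/(2\alpha-1)$, which makes the first term of order $n^{-2\theta(1-1/(2\alpha))}$ and dominates $1/n$, or $t$ of order $n^{-\alpha/(2\alpha-1)}\ge\lambda_{\ell+1}$ when $\theta\ge\alpha/(2\alpha-1)$, which makes the first term $t^{2-1/\alpha}$ of order $1/n$. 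The admissibility $t\gtrsim\log(n)/n$ holds in both regimes since $\alpha>1$. For part (iii), the additional Nystr\"om requirement, combined with the elementary bound $\mathcal{N}_{C,\infty}(t)\le\kappa/t$, reduces to $m\gtrsim(1/t)\log(1/(t\delta))$, producing the stated thresholds $m\gtrsim n^\theta\log n$ and $m\gtrsim n^{\alpha/(2\alpha-1)}\log n$ respectively.

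For part (iv) I would apply Theorem~\ref{rf-kpca main thm}(iii), where an additional $1/m=n^{-\gamma}$ term and the enlarged effective dimension $\mathcal{A}_1(t)$ must be handled. Using again $t$ of order $\lambda_{\ell+1}$, the elementary inequality $\sqrt{ab}\le a+b$ gives $\mathcal{A}_1(t)\lesssim\mathcal{N}_\Sigma(t)$ once $m$ is large enough, so the first term is still of order $n^{-2\theta(1-1/(2\alpha))}$; when $\gamma\ge\theta(2-1/\alpha)$ and $\theta<\alpha/(2\alpha-1)$ the $n^{-\gamma}$ contribution is dominated and the target rate is recovered, whereas when $\gamma<1\wedge\theta(2-1/\alpha)$ the $n^{-\gamma}$ term dominates both the first term and $1/n$. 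The matching lower bounds in (ii)--(iv) follow from the left-hand inequalities in the respective source theorems together with $\sum_{i>\ell}\lambda_i^2\gtrsim\ell^{1-2\alpha}$, and the high-probability statements translate to $\lesssim_{\Pb^n}$ (and $\lesssim_{\Pb^n\times\Lambda^m}$) by fixing $\delta$ and absorbing $\log(1/\delta)$ into constants. The main obstacle is the bookkeeping of the case analysis: verifying that each proposed $t$ lies in the admissible interval $[\log(n)/n,\,\norm{\Sigma}_\OPH\wedge\norm{C}_\OPH]$ for $n$ large enough, that the resulting $m$-condition matches the stated threshold, and that the transitions at $\theta=\alpha/(2\alpha-1)$ and $\gamma=\theta(2-1/\alpha)$ align consistently across the three empirical variants.
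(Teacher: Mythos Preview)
Your proposal is correct and follows essentially the same route as the paper: for part~(iii) you invoke Theorem~\ref{thm:nys main theorem l2}, substitute $\mathcal{N}_\Sigma(t)\asymp t^{-1/\alpha}$ and $\lambda_{\ell+1}\asymp n^{-\theta}$, pick $t=n^{-\theta}$ when $\theta<\alpha/(2\alpha-1)$ and $t=n^{-\alpha/(2\alpha-1)}$ otherwise, and reduce the $m$-condition via $\mathcal{N}_{C,\infty}(t)\lesssim 1/t$---exactly as in the paper. Parts~(i), (ii), (iv) are simply quoted from \citet[Corollary~3]{sriperumbudur-20} in the paper, so your explicit sketches there in fact supply more detail than the paper itself.
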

\begin{remark}Results \emph{(i), (ii)}, and \emph{(iv)} are quoted from \citet[Corollary 3]{sriperumbudur-20}.\vspace{1mm}\\
\emph{(i)} We first highlight the difference between the $\lp$ and $\Hk$ norms through the comparison of Corollaries \ref{poly decay corollary} and \ref{poly decay corollary lp}.  $T^\ell(\Sigma)$ decays at a rate of $n^{-2\theta(1-\frac{1}{2\alpha})}$, which is faster than that of its analog in $\Hk-$norm, i.e., $R^\ell(\Sigma)$. While $R^\ell(\sigh)$ and $R_{nys}^\ell(\sigh)$ recover the optimal convergence rate (compared to KPCA) in the range $\theta<1$, $T^\ell(\sigh)$ and $T_{nys}^\ell(\sigh)$ are only able to recover the optimal rate for $\theta<\frac{\alpha}{2\alpha-1}$. 
The $\frac{1}{n}$ term, which arises due to the empirical recentering, is never dominant in $R^\ell(\sigh)$ and $R_{nys}^\ell(\sigh)$; however, it can dominate in $T^\ell(\sigh)$ and $T_{nys}^\ell(\sigh)$ depending on the range of $\theta$.\vspace{1mm}\\
\noindent\emph{(ii)} Observing $T^\ell(\sigh)$ and $T_{nys}^\ell(\sigh)$ we see that, as in Corollary \ref{poly decay corollary}, NY-EKPCA and EKPCA have similar convergence behavior, provided $m$ is large enough.  
Therefore, it follows from Remark \ref{remark:post poly decay corollary}, that in both $\Hk$ and $\lp$, NY-EKPCA will provide less computational cost with no loss in statistical performance compared to EKPCA. However in $\lp$, unlike in $\Hk$, NY-EKPCA is computationally advantageous than EKPCA regardless of the size of $\theta$.
\vspace{1mm}\\
\noindent\emph{(iii)} When $\theta<\frac{\alpha}{2\alpha-1}$, both RF-EKPCA and NY-EKPCA achieve the optimal convergence rate of $n^{-2\theta(1-\frac{1}{2\alpha})}$, but with NY-EKPCA being more computationally efficient than RF-EKPCA. This is because, in this regime, RF-EKPCA scales as $O(n^{1+2\gamma})$ and NY-EKPCA as $O(n^{1+2\theta})$ with $\gamma\ge\theta(2-\frac{1}{\alpha})>\theta$. In the range $\theta\ge\frac{\alpha}{2\alpha-1}$, NY-EKPCA achieves the optimal convergence rate of $\frac{1}{n}$, while RF-EKPCA converges as $n^{-\gamma}$ with $\gamma<1$. Further, in this range of $\theta$, NY-EKPCA will offer less computational cost for $\gamma\ge\frac{\alpha}{2\alpha-1}$, as RF-EKPCA scales as $O(n^{1+2\gamma})$ and NY-EKPCA as $O(n^{1+\frac{2\alpha}{2\alpha-1}})$. 
\end{remark}

\section{Discussion}
We have studied the trade-off between statistical accuracy and computational complexity in approximate kernel PCA using the Nystr\"om method. While it is clear that Nystr\"om kernel PCA will offer a computational advantage for Nystr\"om subsamples $m<n$, we showed the error in reconstructing $k(\cdot,X)$ in $\Hk$ using $\ell$-eigenfunctions in Nystr\"om kernel PCA to be statistically optimal when compared to standard kernel PCA, provided $m$ is large enough, but still $m<n$, and $\ell$ small enough. Additionally, the size of $m$ depends on the number of eigenfunctions $\ell$; larger $\ell$ requires more subsamples to achieve the best possible statistical behavior. Additionally, unlike several existing theoretical works on kernel PCA, we derived these results by not assuming the mean element of $k$ to be zero. Further, we adapted our notion of reconstruction error to the $\lp$ setting in order to compare Nystr\"om kernel PCA with random feature-based kernel PCA. In $\lp$, we showed  Nystr\"om kernel PCA to achieve the best possible statistical behavior, while maintaining its computational edge regardless of the number of eigenfucntions $\ell$. In comparison to random features, we showed that the reconstruction error of Nystr\"om KPCA converges to zero faster than random features with less computational complexity. As mentioned in Section \ref{sec:main results lp}, further comparison to random features could be explored by considering an alternative definition of reconstruction error in $\lp$.

\par While this work considers only plain Nystr\"om, that is, choosing subsamples uniformly, it is possible to choose these subsamples with probabilities proportional to their individual leverage scores, defined as the diagonal entries of the matrix $(\mathbf{K}+nt\mathbf{I}_n)^{-1}\mathbf{K}$. Consideration of Nystr\"om subsampling according to the leverage scores has yielded success in kernel ridge regression \citep{Rudi-15, Alaoui-15}, where it has led to relaxed requirements on the size of $m$ necessary to achieve best possible statistical behavior. Thus, the computational benefit provided by Nystr\"om is more pronounced when points are sampled according to the leverage score distribution. Though leverage score sampling has been studied successfully in uncentered Nystr\"om KPCA \citep{Sterge-20}, the U-statistic estimator considered in this work introduces significant technical challenges, and so is relegated to future work.

\section{Proofs}
In this section we present the proofs of the main results of the paper.

\subsection{Proof of Proposition \ref{prop:NY-EKPCA soln}}\label{sec:NY-EKPCA soln}
Note that any $f\in\bar{\Hk}_m$ can be written as $\tilde{S}_m^*\mathbf{H}_m\boldsymbol{\alpha}$ for some $\boldsymbol{\alpha}\in\R^m$.  Thus, we may express (\ref{NY-EKPCA problem defin}) as
\begin{eqnarray*}
\sup_{\substack{{\boldsymbol{\alpha}\in\R^m}\\{\norm{\tilde{S}_m^*\mathbf{H}_m\boldsymbol{\alpha}}_\Hk=1}}}\inner{\sigh\tilde{S}_m^*\mathbf{H}_m\boldsymbol{\alpha}}{\tilde{S}_m^*\mathbf{H}_m\boldsymbol{\alpha}}_\Hk=\sup_{\substack{{\boldsymbol{\alpha}\in\R^m}\\{\norm{\mathbf{K}^{1/2}_{mm}\mathbf{H}_m\boldsymbol{\alpha}}_2=1}}}\frac{1}{n(n-1)}\inner{\mathbf{K}_{mn}\mathbf{H}_n\mathbf{K}_{nm}\mathbf{H}_m\boldsymbol{\alpha}}{\mathbf{H}_m\boldsymbol{\alpha}}_2,\nonumber
\end{eqnarray*}
\noindent where we have used Lemma \ref{lem:sampling op}\emph{(v)}.  Let $\mathbf{u}=\mathbf{K}_{mm}^{1/2}\mathbf{H}_m\boldsymbol{\alpha}$, and the above problem may be written as
\begin{equation*}
\sup_{\substack{{\mathbf{u}\in\text{ran}(\mathbf{K}_{mm}^{1/2}\mathbf{H}_m)}\\{\mathbf{u}^\top\mathbf{u}=1}}}\frac{1}{n(n-1)}\inner{\mathbf{K}_{mn}\mathbf{H}_n\mathbf{K}_{nm}\mathbf{K}_{mm}^{-1/2}\mathbf{u}}{\mathbf{K}_{mm}^{-1/2}\mathbf{u}}_2=\sup_{\substack{{\mathbf{u}\in\text{ran}(\mathbf{K}_{mm}^{1/2}\mathbf{H}_m)}\\{\mathbf{u}^\top\mathbf{u}=1}}}\frac{1}{n(n-1)}\mathbf{u}^\top\mathbf{M}\mathbf{u},\nonumber
\end{equation*}
where $\mathbf{M}:=\mathbf{K}_{mm}^{-1/2}\mathbf{K}_{mn}\mathbf{H}_n\mathbf{K}_{nm}\mathbf{K}_{mm}^{-1/2}$. The solution to the above problem is the unit eigenvector of $\frac{1}{n(n-1)}\mathbf{M}$ corresponding to its largest eigenvalue; denote this eigenvector as $\mathbf{u}_1$ with eigenvalue $\lambdatild_1$. We then have $\mathbf{H}_m\boldsymbol{\alpha}_1=\mathbf{K}_{mm}^{-1/2}\mathbf{u}_1$ yielding the function in $\bar{\Hk}_m$,
\begin{equation*}
    \phitild_1=\tilde{S}_m^*\mathbf{K}_{mm}^{-1/2}\mathbf{u}_1,\nonumber
\end{equation*}
\noindent solving (\ref{NY-EKPCA problem defin}).  Subsequent eigenfunctions $\phitild_i$ may be computed in a similar manner from the eigenvectors of $\frac{1}{n(n-1)}\mathbf{M}$, and the orthonormality of the $\{\phitild_i\}_i$ follows from the orthonormality of the $\{\mathbf{u}_i\}_i$, i.e.,
$$\inner{\phitild_i}{\phitild_j}_\Hk=\inner{\tilde{S}_m^*\mathbf{K}_{mm}^{-1/2}\mathbf{u}_i}{\tilde{S}_m^*\mathbf{K}_{mm}^{-1/2}\mathbf{u}_j}_2=\inner{\mathbf{K}_{mm}^{-1/2}\mathbf{K}_{mm}\mathbf{K}_{mm}^{-1/2}\mathbf{u}_i}{\mathbf{u}_j}_2=\delta_{ij}.$$

\subsection{Proofs of (\ref{eq:P_m bar}) and (\ref{eq:P_m eigen})}\label{sec:P_m bar}
(\ref{eq:P_m bar}) is immediate because $\bar{\Hk}_m=\text{ran}(\Tilde{S}_m^*\mathbf{H}_m)$ and $\mathbf{H}_m\Tilde{S}_m\Tilde{S}_m^*\mathbf{H}_m=\mathbf{H}_m\mathbf{K}_{mm}\mathbf{H}_m$. To verify (\ref{eq:P_m eigen}), because $\phitild_i\in\bar{\Hk}_m$ for all $i\in[m]$, we have
\begin{eqnarray*}
    \bar{P}_m\sigh\bar{P}_m\phitild_i&{}={}&\bar{P}_m\sigh\phitild_i=\frac{1}{n(n-1)}\bar{P}_mS_n^*\mathbf{H}_nS_n\Tilde{S}_m^*\mathbf{K}_{mm}^{-1/2}\mathbf{u}_i\nonumber
    \\
    &{}={}&\frac{1}{n(n-1)}\Tilde{S}_m^*\mathbf{H}_m(\mathbf{H}_m\mathbf{K}_{mm}\mathbf{H}_m)^+\mathbf{H}_m\mathbf{K}_{mn}\mathbf{H}_n\mathbf{K}_{nm}\mathbf{K}_{mm}^{-1/2}\mathbf{u}_i\nonumber
    \\
    &{}={}&\frac{1}{n(n-1)}\Tilde{S}_m^*\mathbf{H}_m(\mathbf{H}_m\mathbf{K}_{mm}\mathbf{H}_m)^+\mathbf{H}_m\mathbf{K}_{mm}^{1/2}\mathbf{M}\mathbf{u}_i\nonumber
    \\
    &{}={}&\lambdatild_i\Tilde{S}_m^*\mathbf{H}_m(\mathbf{H}_m\mathbf{K}_{mm}\mathbf{H}_m)^+\mathbf{H}_m\mathbf{K}_{mm}\mathbf{H}_m\boldsymbol{\alpha}_i\nonumber
    \\
    &{}={}&\lambdatild_i\bar{P}_m\tilde{S}_m^*\mathbf{H}_m\boldsymbol{\alpha}_i=\lambdatild_i\phitild_i,\nonumber
\end{eqnarray*}
completing the proof.

\subsection{Proof of Theorem \ref{thm:nys main theorem}}\label{sec:nys main theorem proof}
For notational convenience, we define $\kbar:=k(\cdot,X)-m_\Pb$.\\\\
$(i)$ From Lemma \ref{lem:recons rewrite 1}, we have
$$R^\ell(\Sigma)=\norm{(I-P^\ell(\Sigma))\Sigma^{1/2}}_\HSH^2=\text{tr}\left((I-P^\ell(\Sigma))\Sigma(I-P^\ell(\Sigma))\right)=\sum_{i>\ell}\lambda_i.$$
\\
$(ii)$ \emph{Upper Bound:} We write
\begin{eqnarray}
    R^\ell(\sigh)&{}={}&\E\norm{(k(\cdot,X)-m_\Pb)-P^\ell(\sigh)(k(\cdot,X)-\widehat{m}_\Pb)}_\Hk^2\nonumber
    \\
    &{}={}&\E\norm{(I-P^\ell(\sigh))\kbar}_\Hk^2+\norm{P^\ell(\sigh)(m_\Pb-\widehat{m}_\Pb)}_\Hk^2\nonumber
    \\
    &{}{}&\qquad-2\E\inner{(I-P^\ell(\sigh))\kbar}{P^\ell(\sigh)(m_\Pb-\widehat{m}_\Pb)}_\Hk\nonumber\\
    &{}={}&\underbrace{\norm{(I-P^\ell(\sigh))\Sigma^{1/2}}_\HSH^2}_{\circled{A}}+\underbrace{\norm{P^\ell(\sigh)(m_\Pb-\widehat{m}_\Pb)}_\Hk^2}_{\circled{B}},\label{eq:nys main 2.2}
\end{eqnarray}
\noindent where the last equality holds because $\E[\kbar]=0$ and we have employed Lemma \ref{lem:recons rewrite 1}. For any $t>0$ we have
\begin{eqnarray}
    \circled{A}&{}={}&\norm{(I-P^\ell(\sigh))(\sigh+tI)^{1/2}(\sigh+tI)^{-1/2}(\Sigma+tI)^{1/2}(\Sigma+tI)^{-1/2}\Sigma^{1/2}}_\HSH^2\nonumber
    \\
    &{}\le{}&\norm{(\Sigma+tI)^{-1/2}\Sigma^{1/2}}_\HSH^2\norm{(\sigh+tI)^{-1/2}(\Sigma+tI)^{1/2}}_\OPH^2\norm{(I-P^\ell(\sigh))(\sigh+tI)^{1/2}}_\OPH^2\nonumber
    \\
    &{}\stackrel{(\dag)}{\le}{}&\Cal{N}_\Sigma(t)\norm{(\sigh+tI)^{-1/2}(\Sigma+tI)^{1/2}}_\OPH^2(\widehat{\lambda}_{\ell+1}+t)\label{ekpca A bnd},
\end{eqnarray}
\noindent where we have used
\begin{equation*}
    \norm{(\Sigma+tI)^{-1/2}\Sigma^{1/2}}_\HSH^2=\text{tr}\left(\Sigma^{1/2}(\Sigma+tI)^{-1}\Sigma^{1/2}\right)=\Cal{N}_\Sigma(t),\nonumber%
\end{equation*}
which holds via invariance of trace under cyclic permutations, in $(\dag)$. The result follows from applying Lemma \ref{lem:op bounds} to (\ref{ekpca A bnd}) and Lemma \ref{lem:kernel mean bnd} to \circled{B}, noticing that
$$\circled{B}\le\norm{P^\ell(\sigh)}_\OPH^2\norm{m_\Pb-\widehat{m}_\Pb}_\Hk^2\le\norm{m_\Pb-\widehat{m}_\Pb}_\Hk^2.$$
\\
\noindent \emph{Lower Bound:} It is clear from (\ref{eq:nys main 2.2}) that $R^\ell(\sigh)\ge\circled{A}$.  We will show that 
$$R^\ell(\Sigma)=\inf_{
\{\psi_i\}_i\in Q}\norm{\left(I-P_{\psi,\ell}\right)\Sigma^{1/2}}_\HSH^2$$
\noindent where $P_{\psi,\ell}=\sum_{i=1}^\ell\psi_i\oh\psi_i$ and $Q=\left\{\{\psi_i\}_{i=1}^\ell\subset\Hk:\inner{\psi_i}{\psi_j   }_\Hk=\delta_{ij},\forall i,j\in[\ell]\right\}$, which in turn implies that $\circled{A}\ge R^\ell(\Sigma)$. We have
\begin{eqnarray}
    \norm{\left(I-P_{\psi,\ell}\right)\Sigma^{1/2}}_\HSH^2&{}={}&\text{Tr}\left[\Sigma^{1/2}(I-P_{\psi,\ell})(I-P_{\psi,\ell})\Sigma^{1/2}\right]\nonumber
    \\
    &{}={}&\text{Tr}\left[(I-P_{\psi,\ell})\Sigma\right]=\sum_{i\ge 1}\lambda_i-\inner{P_{\psi,\ell}}{\Sigma}_\HSH.\label{Eq:tempo}
\end{eqnarray}
Clearly the l.h.s.~of \eqref{Eq:tempo} is minimized if and only if $\inner{P_{\psi,\ell}}{\Sigma}_\HSH$ is maximized over $Q$, which occurs only when $\psi_i=\phi_i$, yielding $\inner{P_{\psi,\ell}}{\Sigma}_\HSH=\sum^\ell_{i=1}\lambda_i$.
\\\\
\noindent $(iii)$ \emph{Upper Bound:} We first establish notation for the uncentered covariance operator, 
$$C:=\int_\X k(\cdot,x)\oh k(\cdot,x)d\Pb(x),$$
and its empirical estimate 
$$C_n=\frac{1}{n}\sum_{i=1}^nk(\cdot,X_i)\oh k(\cdot,X_i),$$
which will be necessary for our upcoming analysis. We decompose the reconstruction error as
\begin{eqnarray}
    R_{nys}^\ell(\sigh)&{}={}&\E\norm{(k(\cdot,X)-m_\Pb)-P^\ell_{nys}(\sigh)(k(\cdot,X)-\widehat{m}_\Pb)}_\Hk^2\nonumber
    \\
    &{}={}&\underbrace{\E\norm{(I-P_{nys}^\ell(\sigh))\kbar}_\Hk^2}_{\circled{C}}+\underbrace{\norm{P_{nys}^\ell(\sigh)(m_\Pb-\widehat{m}_\Pb)}_\Hk^2}_{\circled{D}}\nonumber
    \\
    &{}{}&\qquad-2\E\inner{(I-P_{nys}^\ell(\sigh))\kbar}{P_{nys}^\ell(\sigh)(m_\Pb-\widehat{m}_\Pb)}_\Hk.\label{eq:nys main 3.1}
\end{eqnarray}
\noindent Now $\E[\kbar]=0$ implies the third term in (\ref{eq:nys main 3.1}) is 0. \circled{D} can be bound by writing
$$\circled{D}\le\norm{P_{nys}^\ell(\sigh)}_\OPH^2\norm{m_\Pb-\widehat{m}_\Pb}_\Hk^2\le\norm{m_\Pb-\widehat{m}_\Pb}_\Hk^2,$$
\noindent and applying Lemma \ref{lem:kernel mean bnd}, yields
\begin{equation}\label{eq:nys main 3 D bnd}
    \Pb^n\left\{\circled{D}\le\frac{32\kappa\log\frac{2}{\delta}}{n}\right\}\ge1-\delta.
\end{equation}
\noindent Regarding \circled{C}, for any $t>0$, we have
\begin{eqnarray}
    \circled{C}&{}\stackrel{(\star)}{=}{}&\norm{(I-P_{nys}^\ell(\sigh))\Sigma^{1/2}}_\HSH^2=\norm{(I-P_{nys}^\ell(\sigh))(\sigh+tI)^{1/2}(\sigh+tI)^{-1/2}\Sigma^{1/2}}_\HSH^2\nonumber
    \\
    &{}\le{}&\norm{(I-P_{nys}^\ell(\sigh))(\sigh+tI)^{1/2}}_\OPH^2\norm{(\sigh+tI)^{-1/2}\Sigma^{1/2}}_\HSH^2\nonumber
    \\
    &{}\le{}&\norm{(I-P_{nys}^\ell(\sigh))(\sigh+tI)^{1/2}}_\OPH^2\norm{(\sigh+tI)^{-1/2}(\Sigma+tI)^{1/2}}_\OPH^2\norm{(\Sigma+tI)^{-1/2}\Sigma^{1/2}}_\HSH^2\nonumber
    \\
    &{}\stackrel{(\dag)}{\le}&2\Cal{N}_\Sigma(t)\norm{(I-P_{nys}^\ell(\sigh))(\sigh+tI)^{1/2}}_\OPH^2\label{eq:nys main thm 3.2},
\end{eqnarray}
where we have used Lemma \ref{lem:recons rewrite 1} in $(\star)$ and Lemma \ref{lem:op bounds} in $(\dag)$. For convenience, we now let $\sigh_t=\sigh+tI$. Observing the last term, we have
\begin{eqnarray}
    \norm{(I-P_{nys}^\ell(\sigh))(\sigh+tI)^{1/2}}_\OPH^2&{}\le{}&2\norm{\left(I-\bar{P}_m\right)\sigh_t^{1/2}}_\OPH^2+2\norm{\left(\bar{P}_m-P_{nys}^\ell(\sigh)\right)\sigh_t^{1/2}}_\OPH^2\nonumber
    \\
    &{}\le{}&2\underbrace{\norm{(I-\bar{P}_m)\Sigma_t^{1/2}}_\OPH^2\norm{\Sigma_t^{-1/2}\sigh_t^{1/2}}_\OPH^2}_{\circled{C1}}\nonumber
    \\
    &{}{}&\hspace{7mm}+2\norm{(I-P_{nys}^\ell(\sigh))\bar{P}_m\sigh_t^{1/2}}_\OPH^2\label{eq:nys main thm 3.3}\\
    &{}={}&2\,\circled{C1}+2\norm{(I-P_{nys}^\ell(\sigh))\bar{P}_m\sigh_t\bar{P}_m(I-P_{nys}^\ell(\sigh))}_\OPH\nonumber
    \\
    &{}\le{}&2\,\circled{C1}+2\norm{(I-P_{nys}^\ell(\sigh))\bar{P}_m\sigh\bar{P}_m(I-P_{nys}^\ell(\sigh))}_\OPH\nonumber
    \\
    &{}{}&\qquad+2t\norm{(I-P_{nys}^\ell(\sigh))\bar{P}_m(I-P_{nys}^\ell(\sigh))}_\OPH\nonumber
    \\
    &{}\le{}&2\,\circled{C1}+2\left(\lambdatild_{\ell+1}+t\right),\label{eq:nys main thm 3.4}
\end{eqnarray}
\noindent where we have used $\text{ran}(P_{nys}^\ell(\sigh))\subset\text{ran}(\bar{P}_m)$ in (\ref{eq:nys main thm 3.3}), and (\ref{eq:nys main thm 3.4}) holds because $P_{nys}^\ell(\sigh)$ projects onto the $\ell$-eigenspace of $\bar{P}_m\sigh\bar{P}_m$. Lemmas \ref{rudi lem 6 u-stat} and \ref{lem:op bounds}\emph{(iii)} give
\begin{equation}
    \Pb^n\left\{\circled{C1}\le3t\right\}\ge1-4\delta\label{eq:nys main thm 3 C1 bnd}.
\end{equation}
\noindent Continuing, we have
\begin{eqnarray}
    \lambdatild_{\ell+1}+t\le|\lambdatild_{\ell+1}-\lambdah_{\ell+1}|+\lambdah_{\ell+1}+t\stackrel{(\dag)}{\le}\frac{1}{n(n-1)}\norm{\left(\tilde{\mathbf{K}}-\mathbf{K}\right)\mathbf{H}_n}_\OPRn+\lambdah_{\ell+1}+t,\label{eq:nys main thm 3.5}
\end{eqnarray}
where $(\dag)$ uses the Hoffman-Wielandt inequality \citep{Bhatia-94} because $\lambdatild_{\ell+1}$ (\emph{resp.} $\lambdah_{\ell+1}$) is an eigenvalue of $\tilde{\mathbf{K}}\mathbf{H}_n$ (\emph{resp.} $\mathbf{KH}_n$). Letting $P_m$ to be the orthogonal projector onto $\text{span}\{k(\cdot,X_{r_j})|j\in[m]\}$, it is easy to verify that $P_m=\Tilde{S}_m^*\mathbf{K}_{mm}^{-1}\Tilde{S}_m$ \citep[Lemma 1]{Rudi-15}.  Using $S_n\Tilde{S}_m^*=\mathbf{K}_{nm}$, which follows from Lemma \ref{lem:sampling op}\emph{(ii)}, we have the expression
\begin{equation*}
    \tilde{\mathbf{K}}=\mathbf{K}_{nm}\mathbf{K}_{mm}^{-1}\mathbf{K}_{mn}=S_n\Tilde{S}_m^*\mathbf{K}_{mm}^{-1}\Tilde{S}_mS_n^*=S_nP_mS_n^*.\nonumber
\end{equation*}
Thus, we may write,
\begin{eqnarray}
    \norm{\left(\tilde{\mathbf{K}}-\mathbf{K}\right)\mathbf{H}_n}_\OPRn&{}\le{}&\norm{\tilde{\mathbf{K}}-\mathbf{K}}_\OPRn\norm{\mathbf{H}_n}_\OPRn\nonumber
    \\
    &{}={}&n\norm{S_n(I-P_m)S_n^*}_\OPRn\nonumber
    \\
    &{}={}&n\norm{(I-P_m)S_n^*S_n(I-P_m)}_\OPH\nonumber
    \\
    &{}={}&n^2\norm{(I-P_m)C_n(I-P_m)}_\OPH,\label{eq:nys main thm 3.6}
\end{eqnarray}
\noindent where we have used Lemma \ref{lem:sampling op}\emph{(iv)} in (\ref{eq:nys main thm 3.6}).  Proceeding,
\begin{eqnarray}
(\ref{eq:nys main thm 3.6})&{}={}&n^2\norm{C_n^{1/2}(I-P_m)^2C_n^{1/2}}_\OPH\nonumber
\\
&{}\le&{}n^2\norm{C_n^{1/2}(C+tI)^{-1/2}}_\OPH^2\norm{(C+tI)^{1/2}(I-P_m)}_\OPH^2\nonumber
\\
&{}\le{}&n^2\norm{C_n^{1/2}(C_n+tI)^{-1/2}}_\OPH^2\norm{(C_n+tI)^{1/2}(C+tI)^{-1/2}}_\OPH^2\nonumber\\
&{}{}&\qquad\times\norm{(C+tI)^{1/2}(I-P_m)}_\OPH^2\nonumber
\\
&{}\le&n^2\norm{(C_n+tI)^{1/2}(C+tI)^{-1/2}}_\OPH^2\norm{(C+tI)^{1/2}(I-P_m)}_\OPH^2.\label{eq:nys main thm 3.7}
\end{eqnarray}
Applying Lemmas \ref{rudi lem 5} and \ref{rudi lem 6} to (\ref{eq:nys main thm 3.7}) and Lemma \ref{lem:op bounds}\emph{(iv)} in (\ref{eq:nys main thm 3.5}) gives
\begin{equation}
    \Pb^n\left\{\lambdatild_{\ell+1}+t\le\frac{9n^2t}{2n(n-1)}+\frac{3}{2}(\lambda_{\ell+1}+t)\right\}\ge1-4\delta.\label{eq:nys main thm 3 C2 bnd}
\end{equation}
Combining (\ref{eq:nys main thm 3 C2 bnd}) with (\ref{eq:nys main thm 3 C1 bnd}) in (\ref{eq:nys main thm 3.4}) gives 
\begin{equation}\label{eq:nys main thm 3.8}
    \Pb^n\left\{\norm{(I-P_{nys}^\ell(\sigh))(\sigh+tI)^{1/2}}_\OPH^2\le 27t+3\lambda_{\ell+1}\right\}\ge1-8\delta,
\end{equation}
where we note that $\frac{1}{n-1}\le\frac{2}{n}$ for $n\ge2$. The result follows by combining (\ref{eq:nys main thm 3.8}), (\ref{eq:nys main thm 3.2}), and (\ref{eq:nys main 3 D bnd}) in (\ref{eq:nys main 3.1}).
\vspace{.5mm}\\
\noindent \emph{Lower Bound:} Using (\ref{eq:nys main 3.1}) we have
\begin{eqnarray*}
     R_{nys}^\ell(\sigh)&{}={}&\circled{C}+\circled{D}\ge\norm{\left(I-P_{nys}^\ell(\sigh)\right)\Sigma^{1/2}}_\HSH^2.\label{eq:nys main 3 lwr}\nonumber
\end{eqnarray*}
\noindent Since we have shown in \emph{(ii)} that
$$R^\ell(\Sigma)=\inf_{
\{\psi_i\}_i\in Q}\norm{\left(I-P_{\psi,\ell}\right)\Sigma^{1/2}}_\HSH^2,$$
the lower bound follows immediately.

\subsection{Proof of Corollary \ref{poly decay corollary}}\label{sec:poly decay proof}
\emph{(i)} From Theorem \ref{thm:nys main theorem}\emph{(i)} we have 
$$R^\ell(\Sigma)=
\sum_{i>\ell}\lambda_i\lesssim\sum_{i>\ell} i^{-\alpha}\lesssim\int_\ell^\infty x^{-\alpha}dx\lesssim\ell^{1-\alpha}=n^{-\theta(1-\frac{1}{\alpha})}.$$
\noindent Similarly,
$$
R^\ell(\Sigma)=\sum_{i>\ell}\lambda_i\gtrsim\sum_{i>\ell} i^{-\alpha}\gtrsim\int_{\ell+1}^\infty x^{-\alpha}dx\gtrsim(\ell+1)^{1-\alpha}=n^{-\theta(1-\frac{1}{\alpha})}.$$
\noindent \emph{(ii)} Theorem \ref{thm:nys main theorem}(\emph{ii}) and \citep[Lemma A.8]{sriperumbudur-20} yield
$$R^\ell(\sigh)\lesssim_{\Pb^n}t^{-1/\alpha}(n^{-\theta}+t)+\frac{1}{n},$$
for $\frac{\log n}{n}\lesssim t\lesssim 1$ where we have used $\lambda_\ell\lesssim\ell^{-\alpha}=n^{-\theta}$.  Now if $\theta<1$ 
$$\inf\left\{t^{-1/\alpha}(n^{-\theta}+t)+\frac{1}{n}:\frac{\log n}{n}\lesssim t\lesssim 1\right\}\lesssim n^{-\theta(1-\frac{1}{\alpha})}+\frac{1}{n},$$
and for $\theta\ge1$
$$\inf\left\{t^{-1/\alpha}(n^{-\theta}+t)+\frac{1}{n}:\frac{\log n}{n}\lesssim t\lesssim 1\right\}\lesssim\left(\frac{\log n}{n}\right)^{\frac{\alpha-1}{\alpha}}+\frac{1}{n},$$
yielding the result.\vspace{1mm}

\noindent (\emph{iii}) Theorem \ref{thm:nys main theorem}(\emph{iii}) and \citep[Lemma A.8]{sriperumbudur-20} yield
$$R_{\widehat{\Sigma},\ell}\lesssim_{\Pb^n}t^{-1/\alpha}(n^{-\theta}+t)+\frac{1}{n},$$
with $\frac{\log n}{n}\lesssim t\lesssim 1$ and $m\gtrsim\left(\frac{1}{t}\vee\Cal{N}_{C,\infty}(t)\right)\log\frac{1}{t}$. Since $\Cal{N}_{C,\infty}(t)\lesssim\frac{1}{t}$, we have $m\gtrsim\frac{1}{t}\log\frac{1}{t}$, and the result follows as in (\emph{ii}). 

\subsection{Proof of Theorem \ref{thm:nys main theorem l2}}\label{sec:nys main theorem proof l2}
\emph{Upper Bound:} We have
\begin{eqnarray}
    T_{nys}^\ell(\sigh)&{}={}&\E\norm{\id\kbar-\id P^\ell_{nys}(\sigh)(k(\cdot,X)-\widehat{m}_\Pb)}_\lp^2\nonumber
    \\
    &{}={}&\underbrace{\E\norm{\id(I-P_{nys}^\ell(\sigh))\kbar}_\lp^2}_{\circled{E}}+\underbrace{\norm{\id P_{nys}^\ell(\sigh)(m_\Pb-\widehat{m}_\Pb)}_\lp^2}_{\circled{F}}\nonumber
    \\
    &{}{}&\qquad-2\E\inner{\id(I-P_{nys}^\ell(\sigh))\kbar}{\id P_{nys}^\ell(\sigh)(m_\Pb-\widehat{m}_\Pb)}_\lp.\label{eq:nys lp 1.1}
\end{eqnarray}
The third term of (\ref{eq:nys lp 1.1}) is 0, because $\E[\kbar]=0$.  Using $\Sigma=\id^*\id$ \citep[Proposition B.2 (\emph{iii})]{sriperumbudur-20} we write
\begin{eqnarray}
     \circled{E}&{}={}&\E\norm{\id(I-P_{nys}^\ell(\sigh))\kbar}_\lp^2=\inner{\Sigma(I-P_{nys}^\ell(\sigh))\kbar}{(I-P_{nys}^\ell(\sigh))\kbar}_\Hk\nonumber
     \\
     &{}\stackrel{(\dag)}{=}{}&\norm{\Sigma^{1/2}(I-P_{nys}^\ell(\sigh))\Sigma^{1/2}}_\HSH^2,\label{eq:nys lp 1.2}
\end{eqnarray}
where $(\dag)$ follows from Lemma \ref{lem:recons rewrite 1}. Now (\ref{eq:nys lp 1.2}) is similar to \circled{C} in the proof of Theorem \ref{thm:nys main theorem}(\emph{iii}), and the proof will proceed similarly.  Using a similar argument to that in (\ref{eq:nys main thm 3.2}), and the idempotency of $I-P_{nys}^\ell(\sigh)$, we have
\begin{eqnarray}
    \norm{\Sigma^{1/2}(I-P_{nys}^\ell(\sigh))\Sigma^{1/2}}_\HSH^2&{}\le{}&\Cal{N}_\Sigma(t)\norm{\Sigma_t^{-1/2}\Sigma^{1/2}}_\OPH^2\norm{\Sigma_t^{1/2}(I-P_{nys}^\ell(\sigh))}_\OPH^4\nonumber
    \\
    &{}\le{}&\Cal{N}_\Sigma(t)\norm{\sigh_t^{-1/2}\Sigma_t^{1/2}}_\OPH^4\norm{(I-P_{nys}^\ell(\sigh))\sigh_t^{1/2}}_\OPH^4\label{eq:nys lp 1.3}.
\end{eqnarray}
The last term in (\ref{eq:nys lp 1.3}) is simply the square of the last term of (\ref{eq:nys main thm 3.2}); therefore, we simply apply the result from (\ref{eq:nys main thm 3.8}), yielding
\begin{equation}\label{eq:nys lp 1.6}
    \Pb^n\left\{\norm{\sigh_t^{1/2}(I-P_{nys}^\ell(\sigh))}_\OPH^4\le 9(9t+\lambda_{\ell+1})^2\right\}\ge1-8\delta.
\end{equation}
Continuing, 
\begin{eqnarray}
    \circled{F}&{}={}&\norm{\Sigma^{1/2}P_{nys}^\ell(\sigh)(m_\Pb-\widehat{m}_\Pb)}_\Hk^2\le\norm{\Sigma}_\OPH^2\norm{P_{nys}^\ell(\sigh)}_\OPH^2\norm{m_\Pb-\widehat{m}_\Pb}_\Hk^2\nonumber
    \\
    &{}\le{}&\kappa\norm{m_\Pb-\widehat{m}_\Pb}_\Hk^2\le\frac{32\kappa^2\log\frac{2}{\delta}}{n},\label{eq:nys lp 1.5}
\end{eqnarray}
where last inequality holds with probability at least $1-\delta$ from Lemma \ref{lem:kernel mean bnd}.  The result follows by applying Lemma \ref{lem:op bounds}(\emph{ii}) to the middle term in (\ref{eq:nys lp 1.3}) and combining with (\ref{eq:nys lp 1.6}) and (\ref{eq:nys lp 1.5}).\vspace{.5mm}
\\
\emph{Lower Bound:} The proof of \citet[Theorem 2(\emph{ii})]{sriperumbudur-20} gives 
$$\sum_{i>\ell}\lambda_i^2=\inf_{
\{\psi_i\}_i\in Q}\norm{\Sigma^{1/2}\left(I-P_{\psi,\ell}\right)\Sigma^{1/2}}_\HSH^2.$$
The result therefore follows by noticing that $T_{nys}^\ell(\sigh)\ge\circled{E}=\norm{\Sigma^{1/2}\left(I-P_{nys}^\ell(\sigh)\right)\Sigma^{1/2}}_\HSH^2$.

\subsection{Proof of Corollary \ref{poly decay corollary lp}}\label{sec:poly decay corollary lp}
\emph{(i), (ii), (iv)} are provided in \citep[Corollary 3]{sriperumbudur-20}.  
\\
\noindent \emph{(iii)} The lower bound follows immediately from previous results. For the upper bound, Theorem \ref{thm:nys main theorem l2} and the proof of Corollary \ref{poly decay corollary} \emph{(ii)} yield
\begin{equation}\label{eq:poly cor lp 1}
    T_{nys}^\ell(\sigh)\lesssim_{\Pb^n}t^{-1/\alpha}(t+n^{-\theta})^2+\frac{1}{n},
\end{equation}
for $\frac{\log n}{n}\lesssim t\lesssim 1$ and $m\gtrsim\left(\frac{1}{t}\vee\Cal{N}_{C,\infty}(t)\right)\log\frac{1}{t}$. Now $\Cal{N}_{C,\infty}(t)=\sup_{x\in\X}\inner{k(\cdot,x)}{(C+tI)^{-1}k(\cdot,x)}_\Hk\lesssim\frac{1}{t}$; thus, $m\gtrsim\frac{1}{t}\log\frac{1}{t}$. Larger values of $t$ correspond to smaller requirement on $m$; thus, to optimize the performance of NY-EKPCA we select the largest value of $t$ such that the behavior of (\ref{eq:poly cor lp 1}) matches that of $T^\ell(\sigh)$. Setting $t=n^{-\theta}$ when $\theta<\frac{\alpha}{2\alpha-1}$ and $t=n^{-\frac{\alpha}{2\alpha-1}}$ when $\theta\ge\frac{\alpha}{2\alpha-1}$ yields the result.

\section*{Acknowledgments}
BKS is supported by National Science Foundation (NSF) CAREER Award DMS-1945396.
\bibliographystyle{apalike}
\bibliography{Comps_Bib}

\begin{thebibliography}{}

\bibitem[Alaoui and Mahoney, 2015]{Alaoui-15}
Alaoui, A. and Mahoney, M. (2015).
\newblock Fast randomized kernel ridge regression with statistical guarantees.
\newblock In Cortes, C., Lawrence, N.~D., Lee, D.~D., Sugiyama, M., and
  Garnett, R., editors, {\em Advances in Neural Information Processing Systems
  28}, pages 775--783. Curran Associates, Inc.

\bibitem[Aronszajn, 1950]{Aronszajn-50}
Aronszajn, N. (1950).
\newblock Theory of reproducing kernels.
\newblock {\em Trans. Amer. Math. Soc.}, 68:337--404.

\bibitem[Bach, 2013]{Bach-13}
Bach, F. (2013).
\newblock Sharp analysis of low-rank kernel matrix approximations.
\newblock In Shalev-Shwartz, S. and Steinwart, I., editors, {\em Proc. of the
  26th Annual Conference on Learning Theory}, volume~30 of {\em Proceedings of
  Machine Learning Research}, pages 185--209. PMLR.

\bibitem[Bach and Jordan, 2005]{Bach-05a}
Bach, F. and Jordan, M.~I. (2005).
\newblock Predictive low-rank decomposition for kernel methods.
\newblock In Raedt, L.~D. and Wrobel, S., editors, {\em Proc. of the 22nd
  International Conference on Machine Learning}, pages 33--40.

\bibitem[Blanchard et~al., 2007]{Blanchard-07}
Blanchard, G., Bousquet, O., and Zwald, L. (2007).
\newblock Statistical properties of kernel principal component analysis.
\newblock {\em Machine Learning}, 66(2):259--294.

\bibitem[Caponnetto and Vito, 2007]{Caponnetto-07}
Caponnetto, A. and Vito, E.~D. (2007).
\newblock Optimal rates for regularized least-squares algorithm.
\newblock {\em Foundations of Computational Mathematics}, 7:331--368.

\bibitem[Cohn, 2013]{cohn-13}
Cohn, D.~L. (2013).
\newblock {\em Measure Theory}.
\newblock Birkh{\"a}user.

\bibitem[Diestel and Uhl, 1977]{Diestel-77}
Diestel, J. and Uhl, J.~J. (1977).
\newblock {\em Vector Measures}.
\newblock American Mathematical Society, Providence.

\bibitem[Dinculeanu, 2000]{dinculeanu-00}
Dinculeanu, N. (2000).
\newblock {\em Vector Integration and Stochastic Integration in Banach Spaces.}
\newblock John-Wiley \& Sons, Inc.

\bibitem[Drineas and Mahoney, 2005]{Drineas-05}
Drineas, P. and Mahoney, M.~W. (2005).
\newblock On the {N}ystr{\"{o}}m method for approximating a {G}ram matrix for
  improved kernel-based learning.
\newblock {\em Journal of Machine Learning Research}, 6:2153--2175.

\bibitem[Fine and Scheinberg, 2001]{Fine-01}
Fine, S. and Scheinberg, K. (2001).
\newblock Efficient {SVM} training using low-rank kernel representations.
\newblock {\em Journal of Machine Learning Research}, 2:243--264.

\bibitem[Gittens and Mahoney, 2013]{Gittens-13}
Gittens, A. and Mahoney, M. (2013).
\newblock Revisiting the {N}ystr{\"{o}}m method for improved large-scale
  machine learning.
\newblock In Dasgupta, S. and McAllester, D., editors, {\em Proc. of the 30th
  International Conference on Machine Learning}, volume~28 of {\em Proceedings
  of Machine Learning Research}, pages 567--575. PMLR.

\bibitem[Jin et~al., 2013]{Jin-13}
Jin, R., Yang, T., Mahdavi, M., Li, Y.-F., and Zhou, Z.-H. (2013).
\newblock Improved bounds for the {N}ystr{\"{o}}m method with application to
  kernel classification.
\newblock {\em IEEE Transactions on Information Theory}, 59(10):6939--6949.

\bibitem[Jolliffe, 1986]{Jollife-86}
Jolliffe, I. (1986).
\newblock {\em Principal Component Analysis}.
\newblock Springer-Verlag, New York, USA.

\bibitem[Kumar et~al., 2009]{Kumar-09}
Kumar, S., Mohri, M., and Talwalkar, A. (2009).
\newblock Ensemble {N}ystr{\"{o}}m method.
\newblock In Bengio, Y., Schuurmans, D., Lafferty, J.~D., Williams, C. K.~I.,
  and Culotta, A., editors, {\em Advances in Neural Information Processing
  Systems 22}, pages 1060--1068. Curran Associates, Inc.

\bibitem[Lampert, 2009]{Lampert-09}
Lampert, C. (2009).
\newblock Kernel methods in computer vision.
\newblock {\em Foundations and Trends in Computer Graphics and Vision},
  4(3):193--285.

\bibitem[Lopez-Paz et~al., 2014]{Lopez-Paz-14}
Lopez-Paz, D., Sra, S., Smola, A., Ghahramani, Z., and Sch{\"{o}}lkopf, B.
  (2014).
\newblock Randomized nonlinear component analysis.
\newblock In Xing, E.~P. and Jebara, T., editors, {\em Proceedings of the 31st
  International Conference on Machine Learning}, volume~32 of {\em Proceedings
  of Machine Learning Research}, pages 1359--1367. PMLR.

\bibitem[Mika et~al., 1999]{Mika-99}
Mika, S., Sch{\"{o}}lkopf, B., Smola, A.~J., M{\"{u}}ller, K.-R., Scholz, M.,
  and R{\"{a}}tsch, G. (1999).
\newblock Kernel {PCA} and de-noising in feature spaces.
\newblock In Kearns, M.~J., Solla, S.~A., and Cohn, D.~A., editors, {\em
  Advances in Neural Information Processing Systems 11}, pages 536--542. MIT
  Press.

\bibitem[R.~Bhatia, 1994]{Bhatia-94}
R.~Bhatia, L.~E. (1994).
\newblock The {H}offman-{W}ielandt inequality in infinite dimensions.
\newblock In {\em Proceedings of the Indian Academy of Sciences - Mathematical
  Sciences}, volume 104, pages 483--494.

\bibitem[Rahimi and Recht, 2008a]{Rahimi-08a}
Rahimi, A. and Recht, B. (2008a).
\newblock Random features for large-scale kernel machines.
\newblock In Platt, J.~C., Koller, D., Singer, Y., and Roweis, S.~T., editors,
  {\em Advances in Neural Information Processing Systems 20}, pages 1177--1184.
  Curran Associates, Inc.

\bibitem[Rahimi and Recht, 2008b]{rahimi08weighted}
Rahimi, A. and Recht, B. (2008b).
\newblock Weighted sums of random kitchen sinks: Replacing minimization with
  randomization in learning.
\newblock In {\em NIPS}, pages 1313--1320.

\bibitem[Reed and Simon, 1980]{Reed-80}
Reed, M. and Simon, B. (1980).
\newblock {\em Methods of Modern Mathematical Physics: Functional Analysis I}.
\newblock Academic Press, New York.

\bibitem[Reinhardt, 1985]{Reinhardt-85}
Reinhardt, H.-J. (1985).
\newblock {\em Analysis of Approximation Methods for Differential and Integral
  Equations}.
\newblock Springer-Verlag, New York.

\bibitem[Rudi et~al., 2015]{Rudi-15}
Rudi, A., Camoriano, R., and Rosasco, L. (2015).
\newblock Less is more: {N}ystr{\"{o}}m computational regularization.
\newblock In Cortes, C., Lawrence, N.~D., Lee, D.~D., Sugiyama, M., and
  Garnett, R., editors, {\em Advances in Neural Information Processing Systems
  28}, pages 1657--1665. Curran Associates, Inc.

\bibitem[Rudi et~al., 2013]{Rudi-13}
Rudi, A., Canas, G.~D., and Rosasco, L. (2013).
\newblock On the sample complexity of subspace learning.
\newblock In {\em Advances in Neural Information Processing Systems 26}, pages
  2067--2075.

\bibitem[Rudi and Rosasco, 2017]{Rudi-17}
Rudi, A. and Rosasco, L. (2017).
\newblock Generalization properties of learning with random features.
\newblock In Guyon, I., Luxburg, U.~V., Bengio, S., Wallach, H., Fergus, R.,
  Vishwanathan, S., and Garnett, R., editors, {\em Advances in Neural
  Information Processing Systems 30}, pages 3215--3225. Curran Associates, Inc.

\bibitem[Sch{\"o}lkopf et~al., 1998]{Scholkopf-98}
Sch{\"o}lkopf, B., Smola, A., and M{\"u}ller, K.-R. (1998).
\newblock {N}onlinear component analysis as a kernel eigenvalue problem.
\newblock {\em Neural Computation}, 10:1299--1319.

\bibitem[Shawe-Taylor et~al., 2005]{Shawe-Taylor-05}
Shawe-Taylor, J., Williams, C., Christianini, N., and Kandola, J. (2005).
\newblock On the eigenspectrum of the {G}ram matrix and the generalisation
  error of kernel {PCA}.
\newblock {\em EEE Transactions on Information Theory}, 51(7):2510--2522.

\bibitem[Smola and Sch{\"o}lkopf, 2000]{Smola-00}
Smola, A.~J. and Sch{\"o}lkopf, B. (2000).
\newblock {S}parse greedy matrix approximation for machine learning.
\newblock In {\em Proc. 17th International Conference on Machine Learning},
  pages 911--918. Morgan Kaufmann, San Francisco, CA.

\bibitem[Sriperumbudur and Sterge, 2020]{sriperumbudur-20}
Sriperumbudur, B. and Sterge, N. (2020).
\newblock Approximate kernel {PCA} using random features: Computational vs.
  statistical trade-off.
\newblock arXiv preprint arXiv:1706.06296v3.

\bibitem[Sterge et~al., 2020]{Sterge-20}
Sterge, N., Sriperumbudur, B., Lorenzo, L.~R., and Rudi, A. (2020).
\newblock Gain with no pain: Efficiency of kernel-{PCA} by {N}yström sampling.
\newblock In Chiappa, S. and Calandra, R., editors, {\em Proceedings of the
  Twenty Third International Conference on Artificial Intelligence and
  Statistics}, volume 108 of {\em Proceedings of Machine Learning Research},
  pages 3642--3652. PMLR.

\bibitem[Ullah et~al., 2018]{streaming_kpca}
Ullah, M.~E., Mianjy, P., Marinov, T.~V., and Arora, R. (2018).
\newblock Streaming kernel {PCA} with $\tilde{O}(\sqrt{n})$ random features.
\newblock In Bengio, S., Wallach, H., Larochelle, H., Grauman, K.,
  Cesa-Bianchi, N., and Garnett, R., editors, {\em Advances in Neural
  Information Processing Systems 31}, pages 7322--7332. Curran Associates, Inc.

\bibitem[Williams and Seeger, 2001]{Williams-01}
Williams, C. and Seeger, M. (2001).
\newblock Using the {N}ystr{\"{o}}m method to speed up kernel machines.
\newblock In T.~K.~Leen, T. G.~Diettrich, V.~T., editor, {\em Advances in
  Neural Information Processing Systems 13}, pages 682--688, Cambridge, MA. MIT
  {P}ress.

\bibitem[Williams, 1998]{Williams-98}
Williams, C. K.~I. (1998).
\newblock {P}rediction with {G}aussian processes: {F}rom linear regression to
  linear prediction and beyond.
\newblock In Jordan, M., editor, {\em Learning in Graphical Models}, Cambridge,
  MA. MIT Press.

\bibitem[Yang et~al., 2012]{Yang-12}
Yang, T., Li, Y., Mahdavi, M., Jin, R., and Zhou, Z.-H. (2012).
\newblock Nystr{\"{o}}m method vs random {F}ourier features: {A} theoretical
  and empirical comparison.
\newblock In Pereira, F., Burges, C. J.~C., Bottou, L., and Weinberger, K.~Q.,
  editors, {\em Advances in Neural Information Processing Systems 25}, pages
  476--484. Curran Associates, Inc.

\bibitem[Yang et~al., 2017]{Yang-17}
Yang, Y., Pilanci, M., and Wainwright, M.~J. (2017).
\newblock Randomized sketches for kernels: {F}ast and optimal non-parametric
  regression.
\newblock {\em Annals of Statistics}, 45(3):991--1023.

\end{thebibliography}

\appendix

\section{Technical Results}\label{sec:technical results}
The following are a collection of technical results that are needed to prove the main results of this paper.

\begin{lemma}[\citealp{sriperumbudur-20}, Lemma A.1]\label{lem:op bounds}
Let $H$ be a separable Hilbert space and $X$ a separable topological space.  Define 
$$\mathfrak{C}=\frac{1}{2}\int_{X\times X}\left(\nu(x)-\nu(y)\right)\otimes_H\left(\nu(x)-\nu(y)\right)dP(x,y)$$
where $\nu:X\rightarrow H$ is a Bochner measurable function with $\sup_{x\in X}\norm{\nu(x)}=\kappa$.  For $\{X_i\}_{i=1}^r\stackrel{\iid}{\sim}P$ with $r\ge2$, define
$$\widehat{\mathfrak{C}}=\frac{1}{2r(r-1)}\sum_{i\neq j}^r\left(\nu(X_i)-\nu(X_j)\right)\otimes_H\left(\nu(X_i)-\nu(X_j)\right).$$
Then the following hold for any $0\le\delta\le\frac{1}{2}$ and $\frac{140\kappa}{r}\log\frac{16\kappa r}{\delta}\le t\le\norm{\mathfrak{C}}_{\Cal{L}^\infty(H)}$:
\begin{itemize}
    \item[(i)] $P^r\left\{\norm{(\mathfrak{C}+tI)^{-1/2}(\widehat{\mathfrak{C}}-\mathfrak{C})(\frak{C}+tI)^{-1/2}}_{\Cal{L}^\infty(H)}\le\frac{1}{2}\right\}\ge1-2\delta;$
    \item[(ii)] $P^r\left\{\sqrt{\frac{2}{3}}\le\norm{(\frak{C}+tI)^{1/2}(\widehat{\frak{C}}+tI)^{-1/2}}_{\Cal{L}^\infty(H)}\le\sqrt{2}\right\}\ge1-2\delta;$
    \item[(iii)] $P^r\left\{\norm{(\frak{C}+tI)^{-1/2}(\widehat{\frak{C}}+tI)^{1/2}}_{\Cal{L}^\infty(H)}\le\sqrt{\frac{3}{2}}\right\}\ge1-2\delta;$
    \item[(iv)] $P^r\left\{\lambda_k(\widehat{\frak{C}})+t\le\frac{3}{2}(\lambda_k(\frak{C})+t)\right\}\ge1-2\delta$ for all $k\ge1$;
    \item[(v)]$P^r\left\{\lambda_k(\frak{C})+t\le2(\lambda_k(\widehat{\frak{C}})+t)\right\}\ge1-2\delta$ for all $k\ge1$.
\end{itemize}
\end{lemma}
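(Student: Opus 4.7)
The plan is to prove (i) first, via a Bernstein-type concentration inequality for operator-valued U-statistics as developed in \citep{sriperumbudur-20}, and then derive (ii)--(v) as deterministic consequences of (i) via operator calculus and the Courant--Fischer min-max principle. For (i), observe that $\widehat{\mathfrak{C}}-\mathfrak{C}$ is a mean-zero U-statistic whose symmetric kernel, $\frac{1}{2}(\nu(x)-\nu(y))\otimes_H(\nu(x)-\nu(y))-\mathfrak{C}$, takes values in the space of self-adjoint Hilbert--Schmidt operators on $H$. Preconditioning on both sides by $(\mathfrak{C}+tI)^{-1/2}$ produces an operator-valued U-statistic whose summands satisfy (a) a uniform operator-norm bound of order $\kappa/t$ coming from $\sup_x\|\nu(x)\|\le\kappa$ together with $\|(\mathfrak{C}+tI)^{-1/2}\|^2\le t^{-1}$, and (b) an operator variance controlled by the effective dimension $\mathrm{tr}\!\left(\mathfrak{C}(\mathfrak{C}+tI)^{-1}\right)\le\kappa/t$. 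Plugging these into the Bernstein bound for operator-valued U-statistics yields a deviation of order $\sqrt{\kappa\log(1/\delta)/(tr)}+\kappa\log(1/\delta)/(tr)$; the calibrated lower bound $t\ge 140\kappa r^{-1}\log(16\kappa r/\delta)$ is chosen precisely so this deviation is at most $\tfrac{1}{2}$ with probability at least $1-2\delta$.

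For (ii) and (iii), I would use the factorization
$$\widehat{\mathfrak{C}}+tI \;=\; (\mathfrak{C}+tI)^{1/2}\,(I+E)\,(\mathfrak{C}+tI)^{1/2},\qquad E:=(\mathfrak{C}+tI)^{-1/2}(\widehat{\mathfrak{C}}-\mathfrak{C})(\mathfrak{C}+tI)^{-1/2}.$$
On the event from (i), $E$ is self-adjoint with $\|E\|\le\tfrac{1}{2}$, so by functional calculus $\tfrac{1}{2}I\preceq I+E\preceq\tfrac{3}{2}I$, giving $\|(I+E)^{1/2}\|\le\sqrt{3/2}$ and $\sqrt{2/3}\le\|(I+E)^{-1/2}\|\le\sqrt{2}$. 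Combining these with the factorization (and its inverse $(\widehat{\mathfrak{C}}+tI)^{-1}=(\mathfrak{C}+tI)^{-1/2}(I+E)^{-1}(\mathfrak{C}+tI)^{-1/2}$) yields exactly the two-sided bound (ii) and the upper bound (iii) on the same event, so they hold with probability at least $1-2\delta$.

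For (iv) and (v), I would translate (ii) and (iii) into operator-ordering statements: (ii) rearranges to $\mathfrak{C}+tI\preceq 2(\widehat{\mathfrak{C}}+tI)$, and (iii) rearranges to $\widehat{\mathfrak{C}}+tI\preceq\tfrac{3}{2}(\mathfrak{C}+tI)$. The Courant--Fischer min-max characterization of eigenvalues of compact self-adjoint operators is monotone with respect to the Loewner order, so these operator inequalities pass directly to $\lambda_k(\mathfrak{C})+t\le 2(\lambda_k(\widehat{\mathfrak{C}})+t)$ and $\lambda_k(\widehat{\mathfrak{C}})+t\le\tfrac{3}{2}(\lambda_k(\mathfrak{C})+t)$ for every $k\ge 1$. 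The main obstacle is (i): handling simultaneously the U-statistic dependence structure and the operator-valued (rather than scalar-valued) summands requires a careful Hoeffding-type decomposition together with a non-commutative Bernstein bound with variance controlled by the effective dimension; once that machinery is in place, the remaining items follow by essentially soft operator-theoretic arguments.
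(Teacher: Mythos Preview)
The paper does not actually prove this lemma; it is quoted verbatim from \citet[Lemma~A.1]{sriperumbudur-20} and stated in the appendix without argument. Your proposed route---establish (i) via the operator-valued Bernstein inequality for U-statistics developed in that reference, then deduce (ii)--(iii) from the factorization $\widehat{\mathfrak{C}}+tI=(\mathfrak{C}+tI)^{1/2}(I+E)(\mathfrak{C}+tI)^{1/2}$ and (iv)--(v) from the resulting Loewner ordering via Courant--Fischer---is exactly the standard argument and is what the cited source does, so your plan is correct and aligned with the original proof.
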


\begin{lemma}[\citealp{Sterge-20}, Lemma A.1]\label{rudi lem 5}
Suppose Assumption \ref{first kern assum} holds and $\frac{9\kappa}{n}\log\frac{n}{\delta}\le t\le\norm{C}_\OPH$ for any $0<\delta<1$. Then
$$\Pb^n\left\{\norm{(C_n+tI)^{1/2}(C+tI)^{-1/2}}_\OPH\le\sqrt{\frac{3}{2}}\right\}\ge1-\delta,$$
where $C=\int k(\cdot,x)\otimes_\Cal{H} k(\cdot,x)\,d\bb{P}(x)$.
\end{lemma}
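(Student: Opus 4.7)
The plan is to reduce the spectral bound on $(C_n+tI)^{1/2}(C+tI)^{-1/2}$ to a concentration statement on the whitened deviation operator $A := (C+tI)^{-1/2}(C_n-C)(C+tI)^{-1/2}$, and then apply a Bernstein-type inequality for self-adjoint Hilbert-space-valued random operators. The starting observation is the identity
\begin{equation*}
\norm{(C_n+tI)^{1/2}(C+tI)^{-1/2}}_{\op(\Hk)}^{2} \;=\; \norm{(C+tI)^{-1/2}(C_n+tI)(C+tI)^{-1/2}}_{\op(\Hk)} \;=\; \norm{I+A}_{\op(\Hk)} \;\le\; 1+\norm{A}_{\op(\Hk)},
\end{equation*}
so it suffices to show that $\norm{A}_{\op(\Hk)} \le \tfrac{1}{2}$ holds with probability at least $1-\delta$; squaring then gives $1+\tfrac{1}{2} = \tfrac{3}{2}$ as required.

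Next I would write $A = \tfrac{1}{n}\sum_{i=1}^n Z_i$, where $Z_i := (C+tI)^{-1/2}\bigl(k(\cdot,X_i)\oh k(\cdot,X_i) - C\bigr)(C+tI)^{-1/2}$ are i.i.d., self-adjoint, and mean zero. Setting $\eta_i := (C+tI)^{-1/2}k(\cdot,X_i)$, each $Z_i$ equals $\eta_i \oh \eta_i - \E[\eta_i\oh\eta_i]$, so the key deterministic bounds are
\begin{equation*}
\norm{\eta_i\oh\eta_i}_{\op(\Hk)} = \norm{\eta_i}_\Hk^{2} = \inner{k(\cdot,X_i)}{(C+tI)^{-1}k(\cdot,X_i)}_\Hk \le \kappa/t,
\end{equation*}
which, combined with $\norm{(C+tI)^{-1/2}C(C+tI)^{-1/2}}_{\op(\Hk)}\le 1 \le \kappa/t$ (using $t\le\norm{C}_{\op(\Hk)}\le\kappa$), gives $\norm{Z_i}_{\op(\Hk)} \le 2\kappa/t =: B$. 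For the variance, using the rank-one structure $(\eta_i\oh\eta_i)^2 = \norm{\eta_i}_\Hk^2\,(\eta_i\oh\eta_i)$, I would obtain
\begin{equation*}
\norm{\E[Z_i^2]}_{\op(\Hk)} \;\le\; \norm{\E[(\eta_i\oh\eta_i)^2]}_{\op(\Hk)} \;\le\; \frac{\kappa}{t}\,\norm{\E[\eta_i\oh\eta_i]}_{\op(\Hk)} \;\le\; \frac{\kappa}{t} \;=:\; V.
\end{equation*}

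With $B$ and $V$ in hand, I would invoke the Bernstein inequality for self-adjoint operators on a separable Hilbert space (e.g., the Minsker/Tropp version used throughout \citet{sriperumbudur-20} and \citet{Sterge-20}), which yields
\begin{equation*}
\Pb^n\!\left\{\norm{A}_{\op(\Hk)} \ge \epsilon\right\} \;\le\; c\,\beta(t)\,\exp\!\left(-\frac{n\epsilon^{2}/2}{V+B\epsilon/3}\right),
\end{equation*}
where $\beta(t)$ is an effective-dimension or intrinsic-dimension factor controlled by $\kappa/t$. Plugging in $\epsilon=\tfrac12$ gives an exponent of order $nt/\kappa$, and the hypothesis $t \ge \tfrac{9\kappa}{n}\log\tfrac{n}{\delta}$ is precisely what is needed to drive the right-hand side below $\delta$ after absorbing the prefactor $\beta(t)\le\mathrm{poly}(n,1/\delta,\kappa/t)$ into the logarithm.

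The main obstacle is managing the constants and the intrinsic-dimension prefactor so that the deviation threshold $\epsilon=\tfrac12$ is attained under exactly the stated hypothesis $t \ge \tfrac{9\kappa}{n}\log\tfrac{n}{\delta}$; this is the book-keeping step where one has to be careful that $V+B\epsilon/3 \le \tfrac{4\kappa}{3t}$ and that the logarithmic prefactor from $\beta(t)$ is absorbed by a slightly larger constant (the choice $9$ being standard and tight enough for this purpose, as in \citet[Lemma A.1]{Sterge-20}). Everything else—the algebraic reduction to $I+A$, the rank-one computations for $B$ and $V$, and the invocation of operator Bernstein—is routine.
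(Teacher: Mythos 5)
This lemma is imported verbatim from \citet[Lemma A.1]{Sterge-20} (itself an adaptation of \citealt[Lemma 5]{Rudi-15}); the present paper states it without proof, so your argument can only be compared against the cited source. Your route---whitening to $A:=(C+tI)^{-1/2}(C_n-C)(C+tI)^{-1/2}$, observing $\norm{(C_n+tI)^{1/2}(C+tI)^{-1/2}}_\OPH^2=\norm{I+A}_\OPH$, extracting the uniform and variance proxies $B,V$ of order $\kappa/t$ from the rank-one structure of $\eta_i\oh\eta_i$, and closing with an intrinsic-dimension operator Bernstein inequality---is exactly the standard argument behind the cited lemma, and each individual computation you give ($\norm{\eta_i}_\Hk^2\le\kappa/t$, $\E[Z_i^2]\preceq\E[(\eta_i\oh\eta_i)^2]\preceq(\kappa/t)\,\E[\eta_i\oh\eta_i]$, $\norm{\E[\eta_i\oh\eta_i]}_\OPH\le1$) is correct.

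The one step that does not close as written is the final quantitative one, and it is not mere book-keeping. With your choices $\epsilon=\tfrac12$, $B=2\kappa/t$, $V=\kappa/t$, the Bernstein exponent is $\frac{n\epsilon^2/2}{V+B\epsilon/3}=\frac{3nt}{32\kappa}$, so driving the tail below $\delta$ requires $t\ge\frac{32\kappa}{3n}\log(\cdot)\approx\frac{10.7\kappa}{n}\log(\cdot)$, which is \emph{not} implied by the hypothesis $t\ge\frac{9\kappa}{n}\log\frac{n}{\delta}$. Two refinements are needed to recover the stated constant. First, since $I+A=(C+tI)^{-1/2}(C_n+tI)(C+tI)^{-1/2}\succeq0$, one has $\norm{I+A}_\OPH=1+\lambda_{\max}(A)$, so only a one-sided deviation bound on $\lambda_{\max}(A)$ is required; for the one-sided operator Bernstein inequality the relevant uniform bound is $\lambda_{\max}(Z_i)\le\lambda_{\max}(\eta_i\oh\eta_i)\le\kappa/t$, not $2\kappa/t$, which improves the exponent to $\frac{3nt}{28\kappa}$. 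Second, even $\frac{28}{3}>9$, so the intrinsic-dimension prefactor cannot be ``absorbed into a slightly larger constant'': one must use that under the hypothesis the prefactor is genuinely smaller than $n/\delta$ (e.g., via $\kappa/t\le n/(9\log\frac{n}{\delta})$) to make up the remaining deficit. These are precisely the points on which the stated constants $9$ and $\sqrt{3/2}$ live or die, so they should be carried out explicitly rather than deferred to the citation.
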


\begin{lemma}[\citealp{sriperumbudur-20}, Lemma A.4\emph{(i)}]\label{lem:kernel mean bnd}
Suppose Assumption \ref{first kern assum} holds and $n\ge2\log\frac{2}{\delta}$ for any $0<\delta<1$. Then
$$\Pb^n\left\{\norm{m_\Pb-\widehat{m}_\Pb}_\Hk^2\le\frac{32\kappa\log\frac{2}{\delta}}{n}\right\}\ge1-\delta.$$
\end{lemma}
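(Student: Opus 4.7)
}
The plan is to recognize the quantity $m_\Pb-\widehat{m}_\Pb$ as a centered empirical mean of an $\Hk$-valued sample and apply a Bernstein-type concentration inequality for sums of independent, bounded Hilbert-space-valued random variables. Specifically, I would set $Z_i := k(\cdot,X_i)-m_\Pb$ for $i\in[n]$, so that $\widehat{m}_\Pb-m_\Pb=\frac{1}{n}\sum_{i=1}^n Z_i$, the $Z_i$ are i.i.d., and $\E Z_i = 0$ by the definition of the mean element.

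The next step is to control the essential-sup and variance parameters entering Bernstein. Using the reproducing property and Assumption \ref{first kern assum}, $\Vert k(\cdot,X_i)\Vert_\Hk=\sqrt{k(X_i,X_i)}\le\sqrt{\kappa}$, and Jensen's inequality gives $\Vert m_\Pb\Vert_\Hk\le\sqrt{\kappa}$, so by the triangle inequality $\Vert Z_i\Vert_\Hk\le 2\sqrt{\kappa}$ almost surely. The second moment satisfies $\E\Vert Z_i\Vert_\Hk^2 = \E k(X,X)-\Vert m_\Pb\Vert_\Hk^2 \le\kappa$. With these two bounds in hand, a standard Hilbert-space Bernstein inequality (e.g., Pinelis--Sakhanenko) yields, with probability at least $1-\delta$,
\[
\Vert \widehat{m}_\Pb-m_\Pb\Vert_\Hk \;\le\; \frac{c_1\sqrt{\kappa}\,\log\tfrac{2}{\delta}}{n}\;+\;\sqrt{\frac{c_2\,\kappa\,\log\tfrac{2}{\delta}}{n}},
\]
for explicit absolute constants $c_1,c_2$.

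To finish, I would square this bound, use $(a+b)^2\le 2a^2+2b^2$, and invoke the hypothesis $n\ge 2\log\tfrac{2}{\delta}$ to fold the quadratic $(\log(2/\delta)/n)^2$ tail into the dominant linear $\log(2/\delta)/n$ term, giving a bound of the form $C\kappa\log(2/\delta)/n$. A small amount of constant-chasing then lands exactly on the displayed constant $32$ (with the slack absorbed by the observation that $\log(2/\delta)\ge\log 2>1/2$ for all admissible $\delta$).

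I do not expect a real obstacle here: the only delicate points are the constant bookkeeping and citing an appropriate Hilbert-space Bernstein inequality that already handles the vector-valued sum directly (so that we avoid an unnecessary detour through McDiarmid plus a Jensen bound on $\E\Vert\widehat{m}_\Pb-m_\Pb\Vert_\Hk$). If one prefers, the same conclusion with equally good (in fact slightly sharper) constants can be obtained by bounded-differences: $f(x_1,\ldots,x_n):=\Vert\tfrac{1}{n}\sum_i k(\cdot,x_i)-m_\Pb\Vert_\Hk$ has coordinate-wise bounded differences at most $2\sqrt{\kappa}/n$, so McDiarmid gives concentration around its mean, and $\E f\le (\E f^2)^{1/2}=n^{-1/2}(\E\Vert Z_1\Vert_\Hk^2)^{1/2}\le\sqrt{\kappa/n}$ controls the mean; squaring and using $n\ge 2\log(2/\delta)$ again produces the stated inequality. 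Either route is routine, which is presumably why the statement is simply quoted from \cite{sriperumbudur-20}.
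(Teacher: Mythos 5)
Your proposal is correct: the paper gives no proof of this lemma (it is quoted verbatim from \citealp{sriperumbudur-20}, Lemma A.4(i)), and the Bernstein-for-Hilbert-space-valued-sums argument you outline, with $M=2\sqrt{\kappa}$, $\sigma^2\le\kappa$, squaring via $(a+b)^2\le 2a^2+2b^2$, and using $n\ge 2\log\frac{2}{\delta}$ to absorb the $(\log(2/\delta)/n)^2$ term, is exactly the standard route taken in the cited source and does land within the constant $32$. No gap.
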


\begin{lemma}[\citealt{Rudi-15}, Lemma 6]\label{rudi lem 6}
Suppose Assumption \ref{first kern assum} holds, and for some $m<n$, the set of indices $\{i_1,\cdots,i_m\}$ is drawn uniformly without replacement from $[n]$. For some $t>0$, define $\Cal{N}_{C,\infty}(t)=\sup_{x\in\X}\inner{k(\cdot,x)}{(C+tI)^{-1}k(\cdot,x)}_\Hk$, where $C=\int_\X k(\cdot,x)\oh k(\cdot,x)\,d\Pb(x)$ is the uncentered covariance operator. Then, for any $\delta>0$ and $m\ge(67\vee5\Cal{N}_{C,\infty}(t))\log\frac{4\kappa}{t\delta}$, we have
$$\Pb^n\left\{\norm{(I-P_m)(C+tI)^{1/2}}_\OPH^2\le3t\right\}\ge1-\delta,$$
where $P_m$ is the orthogonal projector onto $\emph{span}\{k(\cdot,X_{i_j})|j\in[m]\}$. \end{lemma}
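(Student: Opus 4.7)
The plan is to follow the template of \citet{Rudi-15}: relate the Nystr\"om projector $P_m$ to the subsample covariance operator, and reduce the bound to an operator Bernstein concentration estimate.

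First, I would pass from uniform subsampling without replacement to i.i.d.~sampling. By exchangeability of $(X_1,\ldots,X_n)\stackrel{\iid}{\sim}\Pb$ and the uniform choice of $\{i_1,\ldots,i_m\}\subset[n]$, the subsample $(X_{i_1},\ldots,X_{i_m})$ has the same joint law as $(X_1,\ldots,X_m)$, so it may be treated as i.i.d.~$\Pb$ throughout. Define the subsample covariance $C_m := \frac{1}{m}\sum_{j=1}^m k(\cdot,X_{i_j})\oh k(\cdot,X_{i_j})$; then $P_m$ is the orthogonal projection onto $\mathrm{ran}(C_m) = \mathrm{span}\{k(\cdot,X_{i_j}) : j \in [m]\}$.

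Second, I would establish the operator inequality $I - P_m \preceq t(C_m+tI)^{-1}$: decomposing $\Hk = \mathrm{ran}(C_m)\oplus\ker(C_m)$, on $\ker(C_m)$ both sides equal $I$ (since $(C_m+tI)^{-1}|_{\ker(C_m)} = t^{-1}I$), while on $\mathrm{ran}(C_m)$ the left side vanishes and the right side is positive. Applying $\norm{A}_\OPH^2 = \norm{A^*A}_\OPH$ with $A=(I-P_m)(C+tI)^{1/2}$ yields
\begin{equation*}
\norm{(I - P_m)(C + tI)^{1/2}}_\OPH^2 \le t\,\norm{(C_m + tI)^{-1/2}(C + tI)^{1/2}}_\OPH^2.
\end{equation*}
Now run the standard ``$\beta$-argument'': if $\Delta_m := \norm{(C+tI)^{-1/2}(C-C_m)(C+tI)^{-1/2}}_\OPH \le \tfrac{2}{3}$, then $C_m + tI \succeq \tfrac{1}{3}(C+tI)$, which gives $\norm{(C_m + tI)^{-1/2}(C + tI)^{1/2}}_\OPH^2 \le 3$ and hence the claimed bound $3t$. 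This reduces the problem to controlling $\Delta_m$.

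Third, I would show $\Pb^n\{\Delta_m \le \tfrac{2}{3}\} \ge 1-\delta$ under the hypothesis on $m$, using a self-adjoint operator Bernstein inequality (of Minsker/Tropp type) applied to the i.i.d.~mean-zero summands
\begin{equation*}
Z_j := (C+tI)^{-1/2}\bigl(k(\cdot,X_{i_j})\oh k(\cdot,X_{i_j}) - C\bigr)(C+tI)^{-1/2}, \qquad \Delta_m = \norm{\tfrac{1}{m}\textstyle\sum_{j=1}^m Z_j}_\OPH.
\end{equation*}
The a.s.~bound $\norm{Z_j}_\OPH \le 1 + \Cal{N}_{C,\infty}(t)$ follows from $\inner{k(\cdot,x)}{(C+tI)^{-1}k(\cdot,x)}_\Hk \le \Cal{N}_{C,\infty}(t)$ combined with $\norm{(C+tI)^{-1}C}_\OPH \le 1$; the variance $\norm{\E[Z_j^2]}_\OPH$ is likewise dominated by $\Cal{N}_{C,\infty}(t)$. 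Bernstein's tail then forces $m \gtrsim \Cal{N}_{C,\infty}(t)\log(1/\delta)$, and the intrinsic-dimension refinement $\mathrm{tr}((C+tI)^{-1}C) \le \kappa/t$ replaces the naive ambient rank in the log argument, producing exactly the threshold $m \ge (67 \vee 5\,\Cal{N}_{C,\infty}(t))\log(4\kappa/(t\delta))$.

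The main obstacle is the careful bookkeeping of absolute constants through the operator Bernstein inequality to recover precisely $67$, $5$, and $4\kappa/(t\delta)$; this is the delicate computational step executed in \citet{Rudi-15}, and the reason the finite-trace factor $\kappa/t$ (rather than an ambient dimension) enters the log. All remaining steps are routine spectral-calculus manipulations of positive operators on $\Hk$.
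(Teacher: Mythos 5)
Your reconstruction is correct and takes essentially the same route as the cited source of this lemma (\citealt{Rudi-15}, whose proof the paper simply quotes) and as the paper's own proof of the U-statistic analogue, Lemma~\ref{rudi lem 6 u-stat}: dominate $I-P_m$ by $t(C_m+tI)^{-1}$ to reduce the claim to bounding $t\norm{(C_m+tI)^{-1/2}(C+tI)^{1/2}}_\OPH^2$, then control the whitened deviation $(C+tI)^{-1/2}(C-C_m)(C+tI)^{-1/2}$ via an operator Bernstein inequality whose intrinsic-dimension factor $\mathrm{tr}(C(C+tI)^{-1})\le\kappa/t$ produces the $\log\frac{4\kappa}{t\delta}$ term. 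The reduction to i.i.d.\ sampling by exchangeability and the block-diagonal argument for $I-P_m\preceq t(C_m+tI)^{-1}$ are both sound, so nothing further is needed beyond the constant bookkeeping you already flag.
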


\noindent The following is an adaption of \citet[Lemma 6]{Rudi-15} for U-statistics.
\begin{lemma}\label{rudi lem 6 u-stat}
Suppose Assumption \ref{first kern assum} holds, and for some $m<n$, the set of indices $\{i_j\}_{j=1}^{m}$ is drawn uniformly from the set of all partitions of size $m$ of $\{1,2,...,n\}$. Then, for $0\le\delta\le\frac{1}{2}$,  $0<t\le\norm{\Sigma}_{\Cal{L}^\infty(H)}$ and $m\ge \frac{140\kappa}{t}\log\frac{8}{t\delta}$, we have
$$\Pb^n\left\{\norm{(I-\bar{P}_m)(\Sigma+tI)^{1/2}}_\OPH^2\le2t\right\}\ge1-2\delta,$$
where $\bar{P}_m$ is the orthogonal projector onto $\bar{\Hk}_m$ as defined in Section \ref{sec:NY-EKPCA}.
\end{lemma}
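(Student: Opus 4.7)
The plan is to mirror the strategy of Lemma \ref{rudi lem 6} (the uncentered result of \citealt{Rudi-15}), but replacing the simple V-statistic $\frac{1}{m}\sum_j k(\cdot,X_{r_j})\oh k(\cdot,X_{r_j})$ by a suitable U-statistic estimator of the centered operator $\Sigma$, and then invoking the operator Bernstein inequality for U-statistics (Lemma \ref{lem:op bounds}) in place of the V-statistic concentration used in the uncentered case. First I would introduce
$$\widetilde{\Sigma}_m=\frac{1}{2m(m-1)}\sum_{i\neq j}^{m}\bigl(k(\cdot,X_{r_i})-k(\cdot,X_{r_j})\bigr)\oh\bigl(k(\cdot,X_{r_i})-k(\cdot,X_{r_j})\bigr),$$
and verify that $\bar{P}_m$ is the orthogonal projector onto $\overline{\mathrm{ran}(\widetilde{\Sigma}_m)}$. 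The inclusion $\mathrm{ran}(\widetilde{\Sigma}_m)\subseteq\bar{\Hk}_m$ is immediate, while the converse follows from the reproducing property: $w\in\ker(\widetilde{\Sigma}_m)$ iff $\langle w,\widetilde{\Sigma}_mw\rangle_\Hk=\frac{1}{2m(m-1)}\sum_{i\neq j}(w(X_{r_i})-w(X_{r_j}))^2=0$, iff $w(X_{r_j})$ is constant in $j$, which is precisely the description of $\bar{\Hk}_m^\perp$.

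Next I would observe that although $\{r_1,\ldots,r_m\}$ is sampled without replacement from $[n]$, the exchangeability of the i.i.d.~sample $(X_i)_{i=1}^n$ implies that $(X_{r_1},\ldots,X_{r_m})$ has the same joint law as $(X_1,\ldots,X_m)$, i.e.\ is i.i.d.~from $\Pb$. Hence Lemma \ref{lem:op bounds} applies directly with $\nu(x)=k(\cdot,x)$, $\mathfrak{C}=\Sigma$, $\widehat{\mathfrak{C}}=\widetilde{\Sigma}_m$, and $r=m$.

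The core of the argument is the elementary spectral fact: for any positive self-adjoint operator $A$ and any $t>0$, the orthogonal projector $P$ onto $\overline{\mathrm{ran}(A)}$ satisfies $I-P\preceq t(A+tI)^{-1}$. Applied with $A=\widetilde{\Sigma}_m$ and $P=\bar{P}_m$, together with the idempotency of $I-\bar{P}_m$, this gives
$$\norm{(I-\bar{P}_m)(\Sigma+tI)^{1/2}}_\OPH^{2}=\norm{(\Sigma+tI)^{1/2}(I-\bar{P}_m)(\Sigma+tI)^{1/2}}_\OPH\leq t\,\norm{(\widetilde{\Sigma}_m+tI)^{-1/2}(\Sigma+tI)^{1/2}}_\OPH^{2}.$$
Lemma \ref{lem:op bounds}\emph{(ii)} then yields $\norm{(\widetilde{\Sigma}_m+tI)^{-1/2}(\Sigma+tI)^{1/2}}_\OPH\leq\sqrt{2}$ with probability at least $1-2\delta$ under the stated conditions on $m$ and $t$, which combined with the display above produces the claimed bound $2t$.

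The delicate point, and the reason we cannot simply reuse Lemma \ref{rudi lem 6}, is that the analog of Rudi's operator $\widehat C_m$ must be an unbiased estimator of $\Sigma$ for the Bernstein step to yield a bound in terms of $\Sigma$ rather than the uncentered $C$; this is exactly why $\widetilde{\Sigma}_m$ is defined as a U-statistic rather than the simpler $\frac{1}{m}\sum_j k(\cdot,X_{r_j})\oh k(\cdot,X_{r_j})$ that Rudi uses. Once this substitution is made, the only remaining chore is to reconcile the sample-size condition $\frac{140\kappa}{m}\log\frac{16\kappa m}{\delta}\leq t$ required by Lemma \ref{lem:op bounds} with the stated hypothesis $m\geq\frac{140\kappa}{t}\log\frac{8}{t\delta}$, which (using $t\leq\norm{\Sigma}_\OPH\leq\kappa$) is a straightforward rearrangement absorbing the $\log m$ factor.
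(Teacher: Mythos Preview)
Your proposal is correct and follows essentially the same route as the paper. The paper writes $\widetilde{\Sigma}_m=Z^*Z$ with $Z^*=\frac{1}{\sqrt{2(m-1)}}\tilde{S}_m^*\mathbf{C}_m$ to identify $\mathrm{ran}(\bar P_m)$ with $\mathrm{ran}(Z^*)$ and then invokes Proposition~3 of \citet{Rudi-15}, whereas you verify $\overline{\mathrm{ran}(\widetilde{\Sigma}_m)}=\bar{\Hk}_m$ by a direct kernel argument and state the underlying spectral inequality $I-P\preceq t(A+tI)^{-1}$ explicitly; these are the same step in different clothing, and both finish by applying Lemma~\ref{lem:op bounds}\emph{(ii)}.
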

\begin{proof}
Define $\sigh_m:=\frac{1}{2m(m-1)}\sum_{j\ne l}^m(k(\cdot,X_{i_j})-k(\cdot,X_{i_l}))\oh(k(\cdot,X_{i_j})-k(\cdot,X_{i_l}))$, which means
 $\sigh_m=\frac{1}{2m(m-1)}\tilde{S}_m^*\mathbf{H}_m\tilde{S}_m=\frac{1}{2(m-1)}\tilde{S}_m^*\mathbf{C}_m^2\tilde{S}_m=Z^*Z$, where $Z^*=\frac{1}{\sqrt{2(m-1)}}\tilde{S}_m^*\mathbf{C}_m$. Note that $Z^*$ has range $\bar{\Hk}_m$, and so $\text{ran}(\bar{P}_m)=\text{ran}(Z^*)$. Therefore, by Proposition 3 of \cite{Rudi-15}, we have
$$\norm{(I-\bar{P}_m)(\Sigma+tI)^{1/2}}_\OPH^2\le t\norm{(\sigh_m+tI)^{-1/2}(\Sigma+t)^{1/2}}_\OPH^2.$$
The proof is completed by applying Lemma \ref{lem:op bounds}.
\end{proof}

\begin{lemma}\label{lem:recons rewrite 1}
For any orthogonal projector $P:\Hk\rightarrow\Hk$, the following holds:
$$\E_{X\sim\Pb}\norm{(I-P)(k(\cdot,X)-m_\Pb)}_\Hk^2=\norm{(I-P)\Sigma^{1/2}}_\HSH^2.$$
\end{lemma}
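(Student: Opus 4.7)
The plan is to unfold the norm as a trace of a tensor product, pull the expectation inside using Bochner integrability, and then recognize the resulting trace as a Hilbert--Schmidt norm squared. Throughout, set $Q:=I-P$ and $\bar{k}:=k(\cdot,X)-m_\Pb$; since $P$ is an orthogonal projector, so is $Q$, and in particular $Q=Q^*=Q^2$.

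First, I would rewrite the pointwise integrand as a trace of a rank-one operator:
$$\norm{Q\bar{k}}_\Hk^2 = \inner{Q\bar{k}}{Q\bar{k}}_\Hk = \text{tr}\!\left((Q\bar{k})\oh(Q\bar{k})\right) = \text{tr}\!\left(Q\,(\bar{k}\oh\bar{k})\,Q\right),$$
where in the last equality I have used $Q^*=Q$ together with the identity $(Ax)\oh(By)=A(x\oh y)B^*$. Next I would take expectations and exchange the expectation with the bounded linear operations $\text{tr}$, left/right multiplication by $Q$, and the tensor product; this is justified because Assumption~\ref{first kern assum} gives $\sup_x\|k(\cdot,x)\|_\Hk\le\sqrt{\kappa}<\infty$, so $\bar k$ is Bochner integrable and $\E[\bar k\oh\bar k]$ exists as a trace-class operator equal to $\Sigma$ by definition \eqref{Def:Sigma}. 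This yields
$$\E\norm{Q\bar{k}}_\Hk^2 = \text{tr}\!\left(Q\,\E[\bar{k}\oh\bar{k}]\,Q\right) = \text{tr}(Q\Sigma Q).$$

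Finally, I would use the factorization $\Sigma=\Sigma^{1/2}\Sigma^{1/2}$ together with self-adjointness of $Q$ and $\Sigma^{1/2}$ to identify the trace as a Hilbert--Schmidt norm. Writing $A:=Q\Sigma^{1/2}$, we have $A^*=\Sigma^{1/2}Q$, and therefore
$$\text{tr}(Q\Sigma Q) = \text{tr}\!\left(Q\Sigma^{1/2}\Sigma^{1/2}Q\right) = \text{tr}(AA^*) = \norm{A}_\HSH^2 = \norm{(I-P)\Sigma^{1/2}}_\HSH^2,$$
which closes the argument by substituting back $Q=I-P$.

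There is no real obstacle here: the result is essentially a bookkeeping identity. The only step that requires a brief word of care is the exchange of expectation and trace/tensor product, which is standard once one notes that $\bar k$ is bounded (hence Bochner integrable) and $Q$ is a bounded operator, so all operations commute with $\E$.
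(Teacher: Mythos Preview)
Your proof is correct and follows essentially the same route as the paper: both expand the squared norm, use Bochner integrability to pass the expectation onto $\bar k\oh\bar k$ to obtain $\Sigma$, and then identify the resulting trace with $\norm{(I-P)\Sigma^{1/2}}_\HSH^2$. The only cosmetic difference is that the paper writes the intermediate step as a Hilbert--Schmidt inner product $\inner{I-P}{\Sigma}_\HSH=\text{tr}((I-P)\Sigma)$ (using idempotency earlier), whereas you keep $Q$ on both sides and work directly with $\text{tr}(Q\Sigma Q)$.
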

\begin{proof}
By denoting $\kbar:=k(\cdot,X)-m_\Pb$, we have
\begin{eqnarray*}
    \E\norm{(I-P)\kbar}_\Hk^2&{}={}&\E\inner{(I-P)\kbar}{(I-P)\kbar}_\Hk\nonumber
    \\
    &{}={}&\E\inner{(I-P)\kbar}{\kbar}_\Hk\nonumber
    \\
    &{}={}&\E\inner{I-P}{\kbar\oh\kbar}_\HSH,\nonumber
\end{eqnarray*}
where we used $(I-P)^2=(I-P)$. Since $k$ is bounded, and thus Bochner integrable, it follows that 
\begin{eqnarray}
    \E\inner{I-P}{\kbar\oh\kbar}_\HSH&{}={}&\inner{I-P}{\E[\kbar\oh\kbar]}_\HSH\nonumber
    \\
    &{}={}&\inner{I-P}{\Sigma}_\HSH=\text{tr}\left((I-P)\Sigma\right)\nonumber
    \\
    &{}={}&\text{tr}\left(\Sigma^{1/2}(I-P)^2\Sigma^{1/2}\right).\label{eq:recons rewrite 1}
\end{eqnarray}
The proof is completed by using $\norm{(I-P)\Sigma^{1/2}}_\HSH^2=\text{tr}\left(\Sigma^{1/2}(I-P)^2\Sigma^{1/2}\right)$ in (\ref{eq:recons rewrite 1}).
\end{proof}

The following is a collection of results regarding the sampling and approximate sampling operators, $S_n$ and $\Tilde{S}_m$ respectively, quoted from \citet{sriperumbudur-20} and \citet{Rudi-15}.

\begin{lemma}\label{lem:sampling op}
The sampling operator, $S_n:\Hk\rightarrow\R^n,\,f\mapsto\left(f(X_1),f(X_2),\ldots,f(X_n)\right)^\top$, and approximate sampling operator, $\Tilde{S}_m:\Hk\rightarrow\R^m,\,f\mapsto\left(f(X_{i_1}),f(X_{i_2}),\ldots,f(X_{i_m})\right)^\top$, have the following properties:
\begin{itemize}
    \item[(i)] $S_n^*:\R^n\rightarrow\Hk,\,\alpha\mapsto\sum_{i=1}^n\alpha_ik(\cdot,X_i)$; 
    \item[(ii)] $\frac{1}{n(n-1)}S_n^*\mathbf{H}_nS_n=\sigh$;
    \item[(iii)] $S_nS_n^*=\mathbf{K}$ and $\Tilde{S}_m\Tilde{S}_m^*=\mathbf{K}_{mm}$;
    \item[(iv)] $\frac{1}{n}S_n^*S_n=\frac{1}{n}\sum_{i=1}^n k(\cdot,X_i)\oh k(\cdot,X_i)=C_n$;
    \item[(v)] $\mathbf{K}_{nm}=S_n\Tilde{S}_m^*$.
\end{itemize}
\end{lemma}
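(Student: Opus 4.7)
The plan is to verify each of the five identities by direct computation, using only the reproducing property $f(x)=\langle f,k(\cdot,x)\rangle_\Hk$, the definition of the adjoint, and the definition of the tensor product as an operator ($(u\oh v)w=\langle v,w\rangle_\Hk u$). Since all five claims are algebraic identities about finite-dimensional objects (the entries live in $\R^n$, $\R^m$, or in the finite-dimensional span of feature maps), no probabilistic or analytic machinery is required; the task is one of careful bookkeeping.

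For (i) I would pair any $f\in\Hk$ and $\alpha\in\R^n$ and compute
$\langle S_n f,\alpha\rangle_2=\sum_{i=1}^n f(X_i)\alpha_i=\sum_{i=1}^n\alpha_i\langle f,k(\cdot,X_i)\rangle_\Hk=\langle f,\sum_{i=1}^n\alpha_i k(\cdot,X_i)\rangle_\Hk$,
which identifies $S_n^*\alpha$. Identities (iii), (iv), and (v) then fall out by plugging the formula from (i) into the relevant composition. Specifically, for (iii), $(S_nS_n^*\alpha)_i=\sum_j\alpha_j\langle k(\cdot,X_j),k(\cdot,X_i)\rangle_\Hk=\sum_j\alpha_j k(X_i,X_j)=(\mathbf{K}\alpha)_i$, and the $\mathbf{K}_{mm}$ claim is the same argument with the subsample. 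For (iv), I apply $S_n^*S_n$ to a generic $f$ and recognize the sum $\sum_i f(X_i)k(\cdot,X_i)=\sum_i\langle k(\cdot,X_i),f\rangle_\Hk k(\cdot,X_i)$ as $(\sum_i k(\cdot,X_i)\oh k(\cdot,X_i))f$, which is $nC_n f$. For (v), $(S_n\tilde S_m^*\alpha)_i=\sum_{j=1}^m\alpha_j k(X_i,X_{r_j})=(\mathbf{K}_{nm}\alpha)_i$.

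Identity (ii) is the only one requiring nontrivial bookkeeping and is the main obstacle, since both sides must be matched as operators (not as vectors). I would use $\mathbf{H}_n=n\mathbf{I}_n-\mathbf{1}_n\mathbf{1}_n^\top$, so that for any $f\in\Hk$,
\[
S_n^*\mathbf{H}_n S_n f=\sum_{i=1}^n\Bigl(nf(X_i)-\sum_{j=1}^n f(X_j)\Bigr)k(\cdot,X_i)=n\sum_{i=1}^n f(X_i)k(\cdot,X_i)-\Bigl(\sum_{j=1}^n f(X_j)\Bigr)\sum_{i=1}^n k(\cdot,X_i).
\]
On the other side, expanding the U-statistic and using the reproducing property,
\[
\sigh f=\tfrac{1}{2n(n-1)}\sum_{i\ne j}\bigl(k(\cdot,X_i)-k(\cdot,X_j)\bigr)\bigl(f(X_i)-f(X_j)\bigr),
\]
and a symmetrization together with splitting the double sum as $\sum_{i,j}-\sum_{i=j}$ (the diagonal contributes zero thanks to the factor $f(X_i)-f(X_j)$) yields exactly $\frac{1}{n(n-1)}$ times the expression above. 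Matching the two identifies $\frac{1}{n(n-1)}S_n^*\mathbf{H}_n S_n=\sigh$.

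Because each step is a reproducing-property manipulation, I expect no genuine difficulty beyond keeping signs and indices straight in (ii); (i) and (iii)--(v) are essentially one line each once (i) is established.
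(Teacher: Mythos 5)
Your proof is correct, and it fills in exactly the computations that the paper itself omits: the paper's "proof" of this lemma is only a pointer to \citet[Proposition B.1]{sriperumbudur-20} for (i)--(iii) and to \citet{Rudi-15} for (iv)--(v), and those references argue precisely as you do, by direct use of the reproducing property and the adjoint. Your handling of (ii) — expanding the U-statistic, noting the diagonal vanishes, and collecting the sum into $n\sum_i f(X_i)k(\cdot,X_i)-\bigl(\sum_j f(X_j)\bigr)\sum_i k(\cdot,X_i)$ to match $S_n^*\mathbf{H}_nS_nf$ with $\mathbf{H}_n=n\mathbf{I}_n-\mathbf{1}_n\mathbf{1}_n^\top$ — is the only step with real content, and it checks out.
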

\begin{proof}
We refer the reader to \citet[Proposition B.1]{sriperumbudur-20} for the proofs of \emph{(i), (ii),}  and \emph{(iii)}. The proofs of \emph{(iv)} and \emph{(v)} are provided in \citet[Section B]{Rudi-15}.
\end{proof}
\end{document}